\documentclass[twoside,11pt]{article}
\usepackage{jair, theapa, rawfonts}
\jairheading{78}{2023}{709-746}{09/2022}{11/2023}

\usepackage[usenames,dvipsnames]{xcolor}
\usepackage{amsmath,amssymb,amsthm}
\usepackage[ruled,linesnumbered]{algorithm2e}
\usepackage{thmtools} 
\usepackage{booktabs}
\usepackage{paralist}
\usepackage{caption}
\usepackage{subcaption}
\usepackage{lscape}
\usepackage{graphicx}
\usepackage{adjustbox}
\usepackage{multirow,array}
\usepackage[normalem]{ulem}
\usepackage{enumitem}
\usepackage[capitalise]{cleveref}

\usepackage{cancel}

\newcommand\citet[1]{\citeauthor{#1}~\citeyear{#1}}

\newcommand\m[1]{\mathcal{#1}}

\DeclareMathOperator*{\mini}{minimize}

\newtheorem{thm}{Theorem}
\newtheorem{definition}[thm]{Definition}

\newtheorem{propo}{Proposition}
\newtheorem{proposition}[propo]{Proposition}

\newtheorem{example}{Example}
\newtheorem{obs}{Observation}
\newtheorem{observation}[obs]{Observation}
\newtheorem{theorem}[thm]{Theorem}



\newcommand\setstohit{\ensuremath{\m{H} }\xspace}
\newcommand\decisionvariables{{D}}
\newcommand\decisionvariable{{d}}
\newcommand\settohit{H}
\newcommand\hittingset{h}
\newcommand\F{\ensuremath{\m{F} }\xspace}
\newcommand\subsetT{\ensuremath{\m{S} }\xspace}

\newcommand\Iend{\ensuremath{I_\mathit{end} }\xspace}
\newcommand\formula{\ensuremath{\m{F} }\xspace}
\newcommand\formulac{\ensuremath{\m{C} }\xspace}
\newcommand\formularunning{\ensuremath{C }\xspace}

\newcommand\mm[1]{\ensuremath{#1}\xspace}
\newcommand\nat{\mm{\mathbb{N}}}
\newcommand\ltrue{\mm{\textbf{t}}}

\newcommand\call[1]{\mm{\texttt{#1}}}
\newcommand\cohs{\mm{\call{CondOptHittingSet}}}

\newcommand\sat{\texttt{{SAT}}\xspace}
\newcommand\subsetmaxsat{\texttt{SubsetMax-SAT}}
\newcommand\grow{\mm{\call{Grow}}}
\newcommand\omus{\mm{\call{OUS}}}
\newcommand\comus{\mm{\call{OCUS}}}
\newcommand\mus{\mm{\call{MUS}}}
\newcommand{\comusbound}{\mm{\call{OCUS\_Bound}}}
\newcommand{\comussplit}{\mm{\call{OCUS\_Split}}}
\newcommand\comusincr{\mm{\call{OCUS+Incr.}}}
\newcommand\ousb{\mm{\call{OCUS\_Bound}}}
\newcommand\ousbiter{\mm{\call{OCUS\_Split}}}
\newcommand\ousbincr{\mm{\call{OCUS\_Bound+Incr.}}}
\newcommand\ousbiterincr{\mm{\call{OCUS\_Split+Incr.}}}
\newcommand\corrss{\mm{\call{CorrSS}}}
\newcommand\corrsubsets{\mm{\call{CorrSubsets}}}

\newcommand\hitsetbased{hitting set--based\xspace} 
\newcommand\satsets{\mm{\mathbf{SSs}}}

\newcommand\maxsat{\texttt{MaxSAT}}
\newcommand\domspecmaxsat{\texttt{Dom.-spec.}~\texttt{MaxSAT}}
\newcommand\maxsatfull{\texttt{MaxSAT Full}}
\newcommand\voc{\ensuremath{\Sigma}\xspace}
\newcommand\negset[1]{\mm{\overline{#1}}}

\newcommand\muses[1]{\ensuremath{\mathit{MUSs}(#1)}\xspace}
\newcommand\mcses[1]{\ensuremath{\mathit{MCSs}(#1)}\xspace}

\newcommand\multisubsetmaxsat{\emph{Multi}-\subsetmaxsat\xspace}
\newcommand\onestep{\ensuremath{\call{Explain-One-Step}}\xspace}

\newcommand\onestepomusbounded{\ensuremath{\call{Explain-One-Step-OCUS-Bounded}}\xspace}

\newcommand\onestepo{\ensuremath{\call{Explain-One-Step-OCUS}}\xspace}
\newcommand\iterativeonestep{\ensuremath{\call{Explain-One-Step-OCUS-Split}}\xspace}

\ShortHeadings{Efficiently Explaining CSPs with Unsatisfiable Subset Optimization}{Gamba, Bogaerts, \& Guns}
\firstpageno{709}

\begin{document}
	
\SetKwInOut{Input}{Input}
\SetKwInOut{OptInput}{Optional}
\SetKwInOut{Output}{Output}
\SetKwInOut{State}{State}
\SetKwInOut{Ext.}{Ext}
\SetKwComment{command}{/*}{*/}
\SetKw{Break}{break}

\title{Efficiently Explaining CSPs with Unsatisfiable\\Subset Optimization}

\author{\name Emilio Gamba \email emilio.gamba@kuleuven.be \\
\name Bart Bogaerts \email bart.bogaerts@vub.be \\
\addr Vrije Universiteit Brussel, Pleinlaan 5\\
1050 Elsene, Belgium
\AND
\name Tias Guns \email tias.guns@kuleuven.be \\
\addr KULeuven, Oude Markt 13 - bus 5500\\
3000 Leuven, Belgium
}

\maketitle

\begin{abstract}
	We build on a recently proposed method for stepwise explaining the solutions to Constraint Satisfaction Problems (CSPs) in a human understandable way. 
An explanation here is a sequence of simple inference steps where simplicity is quantified by a cost function. 
Explanation generation algorithms rely on extracting Minimal Unsatisfiable Subsets (MUSs) of a derived unsatisfiable formula, exploiting a one-to-one correspondence between so-called \emph{non-redundant explanations} and MUSs. 
However, MUS extraction algorithms do not guarantee subset minimality or optimality with respect to a given cost function. 
Therefore, we build on these formal foundations and address the main points of improvement, namely how to generate explanations \emph{efficiently} that are provably \emph{optimal} (with respect to the given cost metric). 
To this end, we developed
(1) a hitting set-based algorithm for finding the optimal constrained unsatisfiable subsets;
(2) a method for reusing relevant information across multiple algorithm calls; and
(3) methods for exploiting domain-specific information to speed up the generation of explanation sequences.
We have experimentally validated our algorithms on a large number of CSP problems. We found that our algorithms outperform the MUS approach in terms of \emph{explanation quality} and \emph{computational time} (on average up to 56 \% faster than a standard MUS approach).
\end{abstract}

\section{Introduction}
\label{sec:introduction}

Building on old ideas to explain domain-specific propagation performed by constraint solvers \cite{sqalli1996inference,freuder2001explanation}, we recently introduced a method that takes as input a \emph{satisfiable} set of constraints
and explains the solution finding process in a human understandable way  \shortcite{ecai/BogaertsGCG20}. 
Explanations in this work are sequences of simple inference steps that involve as few constraints and previously derived facts as possible. 
Each explanation step derives at least one new fact implied by a combination of constraints and previously derived facts.

The explanation steps of \citet{ecai/BogaertsGCG20} are heuristically optimised with respect to a given cost function that should approximate human understandability, e.g. taking into account the number of constraints and facts considered, as well as an estimate of their cognitive complexity.
For example, when evaluating explanations for logic grid puzzles, the given clues of the puzzle are considered more difficult than simpler reasoning tricks, that are present in all instances of such puzzles and therefore have a higher cost.

In practice, the explanation generation algorithms presented in \citet{ecai/BogaertsGCG20} rely heavily on calls to \emph{Minimal Unsatisfiable Subsets} (MUS) \cite{marques2010minimal} of a derived unsatisfiable formula, exploiting a one-to-one correspondence between so-called \emph{non-redundant explanations} and MUSs. 

However, the algorithm of \citet{ecai/BogaertsGCG20} has two main weaknesses.
First, it provides \emph{no guarantees on the quality} of the generated explanations due to internally relying on the computation of $\subseteq$-minimal unsatisfiable subsets, which are often suboptimal with respect to the given cost function. 
Second, it suffers from \emph{performance problems}: the lack of optimality is partly overcome by calling a MUS algorithm on increasingly larger subsets of constraints for each candidate fact to explain.
However, using multiple MUS calls per literal in each iteration quickly leads to efficiency problems, causing the explanation generation process to take several hours, even for simple puzzles designed to be solvable by humans.

\paragraph{Contributions} In this paper, we tackle the limitations discussed above.
We develop algorithms that aid explaining in Constraint Satisfaction Problems and improve the state of the-art in the following ways: 

\begin{itemize}
\item We develop algorithms that compute (cost-)\textbf{Optimal} Unsatisfiable Subsets (from now on called OUSs) based on the well-known hitting-set duality which is also used to compute cardinality-minimal MUSs \shortcite{ignatiev2015smallest,DBLP:conf/kr/SaikkoWJ16}.

\item We observe that in the explanation setting, many of the individual calls for MUSs (or OUSs) can actually be replaced by a single call that searches for an optimal unsatisfiable subset \textbf{among subsets satisfying certain structural constraints}. 
We formalize and generalize this observation by introducing the \emph{Optimal \textbf{Constrained} Unsatisfiable Subsets (OCUS)} problem. We then show how $O(n^2)$ calls to MUS/OUS can be replaced by $O(n)$ calls to an OCUS oracle, where $n$ denotes the number of facts to explain. 

\item We develop techniques for further \textbf{optimizing} the O(C)US algorithms, exploiting domain-specific information coming from the fact that we are in the \emph{explanation-generation context}. Such optimizations include 
\begin{inparaenum}[(i)]
	\item the development of methods for \textbf{information re-use} between consecutive O(C)US calls; as well as 
	\item an explanation-specific version of the OCUS algorithm.
\end{inparaenum} 

\item Finally, we extensively \textbf{evaluate} our approaches on a large number of CSP problems.

\end{itemize}

\paragraph{Paper Structure} The rest of this paper is structured as follows. 
In \cref{sec:related-works}, we discuss work related to efficiently computing unsatisfiable subsets. \cref{sec:background} introduces the theoretical foundations of our implicit hitting set approach. \cref{sec:motivation} motivates our work, and \cref{sec:OCUS} introduces the OCUS problem and a hitting set-based algorithm for computing OCUSs. Then, in \cref{sec:OUS}, we look at 2 other methods for computing the next best explanation in the explanation sequence. In \cref{sec:efficient-ocus} we present methods to improve the efficiency of our algorithms, e.g. by making them incremental or by using different methods to extract multiple correction subsets.
Finally, in \cref{sec:experiments} we evaluate our approach on a large set of puzzle instances and conclude with some future perspectives.

\paragraph*{Publication History} This article is an extension of a previous paper presented at the International Joint Conference on Artificial Intelligence 2021 \cite{ijcai2021}. The current paper extends the previous paper with more detailed examples, extensive experiments on a large data set of puzzles, as well as the novel \iterativeonestep algorithm for efficiently computing explanations.

\section{Related Work}\label{sec:related-works}

In the last few years, driven by the increasing many successes in Artificial Intelligence (AI), there has been a growing need for \textbf{eXplainable Artificial Intelligence (XAI)}~\cite{miller2019explanation}.
In the research community, this need manifests itself through the emergence of (interdisciplinary) workshops and conferences on this topic~\shortcite{xai-ijcai,FAT}, as well as American and European incentives to stimulate research in the area~\shortcite{gunning2017explainable,hamonrobustness,fetproact}. 

While the main focus of XAI research has been on explaining black-box machine learning systems \shortcite{lundberg2017unified,guidotti2018survey,ignatiev2019abduction}. Model-based systems, which are typically considered more transparent, are also in need of explanation mechanisms. 
For instance, \citet{vassiliades2021argumentation} surveys the important methods that use argumentation \shortcite{modgil2013added} to provide explainability in AI, with for example applications in medical diagnosis \shortcite{obeid2019using}. Abstract argumentation frameworks introduce an abstract formalism to explain argumentative acceptance \cite{vsevselja2013abstract,liao2020explanation,ulbricht2021strong}. Description Logics \cite{baader2004description} on the other hand, aim at explaining logical proofs \shortcite{alrabbaa2021finding,koopmann2021two}, i.e `why does phi follow from psi?'. 

The main focus of our work lies in providing explainable agency \cite{langley2017explainable} to Constraint Programming (CP) \cite{fai/Rossi06} and Boolean Satisfiability (SAT) \shortcite{faia/2009-185} systems.
Advances in solving, as well as hardware improvements, allow for such systems now easily consider millions of alternatives in short amounts of time.
Because of this complexity, the question arises of how to generate human-interpretable explanations of the conclusions they make. 
Explanations have seen a rejuvenation in various subdomains of constraint reasoning \shortcite{fox2017explainable,vcyras2019argumentation,chakraborti2017plan,ecai/BogaertsGCG20}.
In the CP and SAT communities, there has been a strong focus on explaining \textit{unsatisfiable} problem instances~\cite{junker2001quickxplain}, for example by extracting a minimal unsatisfiable subset (MUS) from the problem constraints \cite{DBLP:journals/jar/LiffitonS08}. 

We have recently introduced \emph{step-wise explanations} \cite{ecai/BogaertsGCG20} to explain the solution of satisfiable instances, exploiting a one-to-one correspondence between so-called non-redundant explanations and MUSs of a derived program. The focus of \citet{ecai/BogaertsGCG20} was on explaining Zebra puzzles; building on this work, \citet{schotten} investigated interpretable explanations using MUSs for a wide range of puzzles.  
Our current work is motivated by a concrete algorithmic need: to generate these explanations \emph{efficiently}, we need algorithms that can find MUSs that are optimal with respect to a given cost function, where the cost function approximates human-understandability of the corresponding explanation step. 

The closest related work can be found in the literature on generating or enumerating MUSs \cite{conf/sat/LynceM04,bacchus2016finding,bacchus2015using,liffiton2016fast}. 
Various techniques are used to find MUSs, including manipulation of resolution proofs produced by SAT solvers \shortcite{goldberg,DBLP:journals/fmsd/GershmanKS08,DBLP:conf/sat/DershowitzHN06}, incremental solving to enable/disable clauses and branch-and-bound search \shortcite{DBLP:conf/dac/OhMASM04}, BDD-manipulation methods \cite{huang}. 
Some methods rely on \emph{seed-shrink} algorithms \cite{bendik2020must,bendik2020replication} which repeatedly start from unsatisfiable \emph{seed} (an unsatisfiable core) and \emph{shrink} the seed to a MUS.
Other methods work by means of translation into a so-called Quantified \maxsat~\cite{DBLP:journals/constraints/IgnatievJM16}, a field that combines the expressiveness of Quantified Boolean Formulas (QBF) \cite{QBF} with techniques from Maximum Satisfiability (\maxsat)~\cite{DBLP:series/faia/LiM09}, or by exploiting the so-called hitting set duality \cite{ignatiev2015smallest}.
To the best of our knowledge, only few works have considered \emph{optimizing} MUSs: the only criterion considered so far is cardinality-minimality \cite{conf/sat/LynceM04,ignatiev2015smallest}.

An \textit{abstract} framework for describing \hitsetbased algorithms, including optimization, has been developed by \citet{DBLP:conf/kr/SaikkoWJ16}. While our approach can be seen to fit within the framework, the terminology is focused on \maxsat~rather than MUS and would complicate our exposition.

\section{Background}\label{sec:background}

In this section, we introduce the terminology and concepts related to explanation generation.
We present all methods using propositional logic but our results easily generalize to richer languages, such as constraint languages, as long as the semantics is given in terms of a satisfaction relation between expressions in the language and possible states of affairs (assignments of values to variables). 

Let \voc be a set of propositional symbols, also called \emph{atoms}; this set is implicit in the rest of the paper. A \emph{literal} is an atom $p$ or its negation $\lnot p$. 
A clause is a disjunction of literals, i.e. $c_1 = x_1 \vee \lnot x_2 $. A formula $\formula$ in conjunctive normal form (CNF) is a conjunction of clauses. 
Slightly abusing notation, a clause is also viewed as a set of literals and a formula as a set of clauses. 

A (partial) interpretation $\m{I}$ is a consistent (not containing both $p$ and $\lnot p$) set of literals. 
Satisfaction of a formula \formula by an interpretation is defined as usual~\cite{faia/2009-185}: 
an interpretation $\m{I}$ \emph{satisfies} \formula if $\m{I}$ contains at least one literal from each clause in $\formula$.
A \emph{model} of \formula is an interpretation that satisfies \formula; $\formula$ is said to be \emph{unsatisfiable} if it has no models.
An interpretation that satisfies $\formula$ is also called a \emph{model} of \formula. 
A formula \formula is \emph{satisfiable} if it has at least one model and \emph{unsatisfiable} otherwise. 

A literal $l$ is a \emph{consequence} of a formula \formula if $l$ holds in all $\formula$'s models. The \emph{maximal consequence} of a formula \formula, denotes a consequence that holds in all models.
If $I$ is a set of literals, we write $\mathbf{\overline{ \raisebox{0pt}[1.1\height]{\textit{I}}}}$  for the set of negated literals $\{\lnot \ell \mid \ell \in I\}$.

\begin{example}\label{example:intro1}
	Let $\formulac_1$ be the CNF formula over atoms $x_1, x_2, x_3$ with the following four clauses:
	\[ c_1 := \lnot x_1 \vee \lnot x_2 \vee x_3 \qquad  c_2 := \lnot x_1 \vee  x_2 \vee x_3 \qquad  c_3 := x_1 \qquad c_4 := \lnot x_2 \vee \lnot x_3 \]
	An example of an interpretation is $\{\lnot x_2, x_3\}$. The \emph{maximal consequence} of formula $\formulac$ is $\{x_1, \lnot x_2, x_3\}$.
\end{example}

In the following, we introduce key properties on unsatisfiable subsets on which we build our algorithms.

\begin{definition}
A \emph{Minimal Unsatisfiable Subset} (MUS) of 
\F is an unsatisfiable subset $\m{S}$ of $\F$ for which every strict subset of $\m{S} $ is satisfiable. 
\muses{\F} denotes the set of MUSs of \F. 
\end{definition}

\begin{definition}
A set $\m{S} \subseteq \formula$ is a \emph{Maximal Satisfiable Subset} (MSS) of $ \formula$ if $\m{S}$ is satisfiable and for all $\m{S}'$ with $\m{S}  \subsetneq  \m{S}'\subseteq\formula $, $\m{S}'$ is unsatisfiable.
\end{definition}

\begin{definition}
A set $\m{S} \subseteq \formula$ is a \emph{correction subset} of \formula if $\formula\setminus\m{S}$ is satisfiable. 
Such a $\m{S}$ is a \emph{minimal correction subset} (MCS)  of \formula if no strict subset of $\m{S}$ is also a correction subset. 
\mcses{\F} denotes the set of minimal correction subsets of \F. 
\end{definition}
Each  MCS of \formula is the complement of an MSS of \formula and vice versa. 	

\begin{definition}\label{def:minimal-hs}
Given a collection of sets $\m{H}$, a hitting set of $\m{H}$ is a set $h$ such that  $h \cap C \neq \emptyset$ for every $C \in \m{H}$. A hitting set is \emph{minimal} if no strict subset of it is also a hitting set.

\end{definition}

The next proposition is the well-known hitting set duality \cite{DBLP:journals/jar/LiffitonS08,ai/Reiter87}  between MCSs and MUSs that forms the basis of our algorithms, as well as algorithms to compute MSSs \cite{DBLP:conf/sat/DaviesB13} and \emph{cardinality-minimal} MUSs \cite{ignatiev2015smallest}.

\begin{proposition}\label{prop:MCS-MUS-hittingset}
A set  $\m{S} \subseteq \formula$ is an MCS of $ \formula$ iff it is a \emph{minimal hitting set} of \muses{\formula}.
A set  $\m{S} \subseteq \formula$ is a MUS of $ \formula$ iff it is a \emph{minimal hitting set} of \mcses{\formula}.
\end{proposition}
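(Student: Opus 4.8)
The plan is to prove each biconditional in two stages: first characterise the \emph{unconstrained} side conditions (being a correction subset, respectively being unsatisfiable) purely as hitting-set conditions, and only afterwards argue that $\subseteq$-minimality transfers across the correspondence. Two elementary facts about finite CNF formulas do all the work: (i) a set of clauses is unsatisfiable if and only if it contains some MUS as a subset, and (ii) a set of clauses is satisfiable if and only if it is contained in some MSS. Both follow from finiteness of $\formula$: repeatedly dropping removable clauses from an unsatisfiable set until none can be removed yields a MUS, and greedily adding clauses to a satisfiable set until none can be added yields an MSS.

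For the first statement, fix $\m{S} \subseteq \formula$. By definition $\m{S}$ is a correction subset iff $\formula \setminus \m{S}$ is satisfiable, which by fact (i) holds iff $\formula \setminus \m{S}$ contains no MUS. For any MUS $M$ (so $M \subseteq \formula$), the inclusion $M \subseteq \formula \setminus \m{S}$ is equivalent to $M \cap \m{S} = \emptyset$; hence $\formula \setminus \m{S}$ contains no MUS iff $M \cap \m{S} \neq \emptyset$ for every $M \in \muses{\formula}$, i.e.\ iff $\m{S}$ is a hitting set of $\muses{\formula}$. Thus the correction subsets of $\formula$ are \emph{exactly} the hitting sets of $\muses{\formula}$.

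For the second statement I would argue dually, reusing the already-established fact that the MCSs of $\formula$ are precisely the complements of its MSSs. Again fix $\m{S} \subseteq \formula$. Then $\m{S}$ fails to hit some MCS $K$ iff $\m{S} \cap K = \emptyset$, i.e.\ iff $\m{S} \subseteq \formula \setminus K$ for the corresponding MSS $\formula \setminus K$; so $\m{S}$ hits every MCS iff $\m{S}$ is contained in no MSS. By fact (ii) this happens iff $\m{S}$ is unsatisfiable. Hence the unsatisfiable subsets of $\formula$ are exactly the hitting sets of $\mcses{\formula}$.

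Finally I would lift both equivalences to their minimal versions. Since ``correction subset'' coincides with ``hitting set of $\muses{\formula}$'' as predicates on subsets of $\formula$, an MCS (a $\subseteq$-minimal correction subset) is exactly a $\subseteq$-minimal hitting set of $\muses{\formula}$; symmetrically, since ``unsatisfiable subset'' coincides with ``hitting set of $\mcses{\formula}$'', a MUS (a $\subseteq$-minimal unsatisfiable subset) is exactly a $\subseteq$-minimal hitting set of $\mcses{\formula}$. This step is immediate because minimality is determined solely by the underlying predicate. I expect the only delicate points to be making facts (i) and (ii) rigorous via finiteness, and checking that the two notions of minimality genuinely range over the same family of sets; once those are secured, the algebra of complements and intersections is routine.
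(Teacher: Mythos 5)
Your proof is correct, and it is essentially the canonical argument: the paper states \cref{prop:MCS-MUS-hittingset} \emph{without proof}, as the well-known duality of \citet{ai/Reiter87} and \citet{DBLP:journals/jar/LiffitonS08}, and your route---first establishing the unconstrained correspondences (correction subsets are exactly the hitting sets of \muses{\formula}, via fact (i); unsatisfiable subsets are exactly the hitting sets of \mcses{\formula}, via fact (ii) and the MCS/MSS complementation the paper records just before the proposition) and then letting $\subseteq$-minimality transfer automatically because the two predicates coincide extensionally---is precisely the proof found in that cited literature. The only implicit step worth making explicit is that a MUS contained in $\formula \setminus \m{S}$ is automatically a MUS of $\formula$ itself (minimal unsatisfiability is a property intrinsic to the set, not relative to the ambient formula), which is what lets your fact (i) mesh with the quantification over $M \in \muses{\formula}$; with that observation added, every step checks out, including the edge cases where $\formula$ is satisfiable ($\muses{\formula}=\emptyset$, so $\emptyset$ is the unique minimal hitting set and indeed the unique MCS) and where $\mcses{\formula}$ contains $\emptyset$ (no hitting sets, matching the absence of unsatisfiable subsets).
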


Consider the following examples to illustrate the concepts related to unsatisfiability introduced in the previous definitions:
\begin{example}\label{example:unsat}
	Let $\formulac_2$ be the unsatisfiable CNF formula over atoms $x_1, x_2, x_3$ with the following clauses:
	\[ c_1 := \lnot x_1 \vee \lnot x_2 \vee x_3 \qquad  c_2 := \lnot x_1 \vee  x_2 \vee x_3  \qquad c_3 := \lnot x_2 \vee \lnot x_3 \qquad  c_4 := x_1\]
	\[ c_5 := \lnot x_1 \vee  x_2 \qquad  c_6 := \lnot x_1 \vee  \lnot x_3 \]

\end{example}
In \cref{example:unsat}, the subset of clauses $\{c_1, c_2, c_4, c_6\}$ is an example of a Minimal Unsatisfiable Subset (MUS) and $ \{\underline{c_1}\},\{\underline{c_4}\}, \{\underline{c_2},c_5\},\{c_5, \underline{c_6}\}, \{c_3, \underline{c_6}\}$ are examples of minimal correction subsets (MCSs). We observe that MUS $\{c_1, c_2, c_4, c_6\}$ hits at least one clause of each MCS.

\section{Motivation}\label{sec:motivation}

Our work is motivated by the problem of explaining the solution to constraint satisfaction problems, or how to explain the information entailed from it, through a sequence of simple explanation steps. This can be used to teach people problem solving skills; to search for mistakes in the problem formulation that lead to undesired models; to compare the difficulty of related satisfaction problems (through the number and complexity of steps required) and in systems that provide interactive tutoring.  
\subsection{Step-wise Explanation Generation}
Our original explanation generation algorithm \cite{ecai/BogaertsGCG20}, shown in \cref{alg:explanation-sequence}, starts from a formula $\formulac$ (in the application it comes from a high-level CSP), an \emph{initial interpretation} $I_0$ (here also seen as a conjunction of literals) and a cost function $f$ which quantifies the difficulty of an explanation step, e.g.~by means of a weight for each clause. 

An \emph{explanation step} is an implication $I' \wedge \formulac' \implies N$ where 
\begin{itemize}
	\item $I'$ is a subset of already derived literals I;
	\item $\formulac'$ is a subset of the constraints of the input formula $\formulac$; and
	\item $N$ is a set of literals entailed by $I'$ and $\formulac'$ which are not yet explained.
\end{itemize} 

The goal is to find a sequence of \emph{simple} explanation steps that explain the \emph{maximal consequence} \Iend, also referred to as the \emph{end interpretation}, entailed by the given initial interpretation $I_0$ (line~\ref{alg:explanation-sequence:prop-init-interpretation}).
In each explanation step, some literals from \Iend are explained, resulting in a precision-increasing sequence of interpretations $I_0 \preceq I_1 \preceq \dots I_{end}$.
Therefore, the intermediate interpretation $I$ at a given step in this sequence will consist of all the literals ``derived so far'' and the remaining literals to be explained will explain correspond to $\Iend \setminus I$.

\begin{algorithm}[!h]
	\DontPrintSemicolon
	\caption{$\mathtt{explain}(\formulac,f,I_0)$}
		\label{alg:explanation-sequence}
	$\Iend \gets \mathtt{propagate}(I \wedge \formulac) $\label{alg:explanation-sequence:prop-init-interpretation}\;
	Seq $\gets$ empty sequence\;
	$	I \gets I_0$\;
	\While{$I \neq Iend$}{
		$S \gets \onestep(\formulac, f, I , \Iend)$\label{alg:explanation-sequence:mus}\;
		$I' \gets I \cap S$ \label{alg:explanation-sequence:literals}\;
		$\formulac' \gets C \cap S $  \label{alg:explanation-sequence:constraints}\;
		$N \gets \mathtt{propagate}(I' \wedge \formulac')$ \label{alg:explanation-sequence:prop-expl}\;
		append $(I', \formulac', N)$ to Seq\;
		$I \gets I \cup N$\;
	}
	\Return{Seq} \;
	
\end{algorithm}

The subset returned by \onestep (line~\ref{alg:explanation-sequence:mus} of Algorithm~\ref{alg:explanation-sequence}) is then mapped back to the set of derived literals $I'$ (line~\ref{alg:explanation-sequence:literals}) and constraints $C'$ (line~\ref{alg:explanation-sequence:constraints}). 
Finally, at line~\ref{alg:explanation-sequence:prop-expl}, we propagate the used literals $I'$ and constraints $C'$ present in the found subset to derive new information $N'$ that holds at the intersection of all models of $I' \wedge C'$, i.e., we compute the maximal consequence of $I' \wedge C'$.

\begin{example}\label{example:intro2}
	Let $\formularunning$ 	be the CNF formula over atoms $x_1, x_2, x_3$ with the following four clauses:
	\[ c_1 := \lnot x_1 \vee \lnot x_2 \vee x_3 \qquad  c_2 := \lnot x_1 \vee  x_2 \vee x_3 \qquad  c_3 := x_1 \qquad c_4 := \lnot x_2 \vee \lnot x_3 \]
	
	Let $I = \{\lnot x_2\}$ be the given \emph{interpretation} with corresponding maximal consequence $\Iend = \{x_1, \lnot x_2,  x_3\}$.
	$\Iend \setminus I = \{x_1, x_3\}$ represents the set of remaining literals to explain and its negation is denoted by $\negset{\Iend \setminus I} = \{\lnot x_1, \lnot x_3\}$. 
	An example of an explanation step is \[\lnot x_2 \wedge c_2 \wedge c_3 \implies x_3 \] where we use derived literal $\lnot x_2$ and constraints $ c_2$ and $ c_3$ to infer $x_3$.
	
\end{example}

Note that all explanations in the examples are expressed in terms of this logical representation, however, the explanations can be translated back to the original input (e.g. natural language clues,, alldifferent constraints, ...).

\subsection{Explanation Step}

The key part of \onestep is the search for the best explanation step given an interpretation $I$ of the literals derived so far.
The procedure depicted in \cref{alg:oneStep} shows the gist of how this is done. 
It takes as input the problem constraints \formulac, a cost function $f$ quantifying the quality of explanations, an interpretation $I$ containing all literals derived so far in the sequence, and the interpretation to be explained $\Iend$. 

To compute an explanation step, this procedure iterates over the literals to be explained (line \ref{alg:onestep:repetition:mus-call} of \cref{alg:oneStep}), computes for each of them an associated MUS (line \ref{alg:onestep:mus-call}), and then selects the lowest cost one from the found MUSs (line \ref{alg:oneStep:ifcheck}).
The reason this works is a one-to-one correspondence between MUSs of $\formulac \land I \land \neg \ell$ and so-called \emph{non-redundant explanation} of $\ell$ in terms of (subsets of) $\formulac$ and $I$~\cite{ecai/BogaertsGCG20}.

\begin{algorithm}[!h]
	\DontPrintSemicolon
	\caption{$\onestep(\formulac,f,I,\Iend)$}
	\label{alg:oneStep}
	${\cal S}_{best} \gets \mathit{nil}$\;
	\For{$\ell \in \{\Iend \setminus I\}$\label{alg:onestep:repetition:mus-call}}{
		${\cal S} \gets \call{MUS}{(\formulac \land I \land \neg \ell)}$\label{alg:onestep:mus-call}\;
		\If{$f({\cal S})<f({\cal S}_{best})$\label{alg:oneStep:ifcheck}}{
			${\cal S}_{best} \gets {\cal S}$\;
		}
	}
	\Return{${\cal S}_{best}$} \;

\end{algorithm}
\begin{example}[\textit{\cref{example:intro2} continued}]\label{example:intro3}
	For computing a non-redundant explanation of $x_3$ zith given interpretation $I  = \{\lnot x_2\}$, we simply extract a \[MUS(\formulac \land I \land \neg \ell) = MUS(\formulac \wedge \{\lnot x_2\} \wedge \{\lnot x_3\}) = \{ c_2 , c_3, \{\lnot x_2\}, \{\lnot x_3\} \}\] also written as $\lnot x_2 \wedge c_2 \wedge c_3 \implies x_3$.
\end{example}

Throughout the paper, we consider cost functions $f$ that map a subset $\m{S} \subseteq \formulac \wedge \Iend \wedge \overline{\Iend \setminus I}$ to a numerical value $f(\m{S})$, namely, a weighted sum on the constraints present in the subset $\m{S}$.  $\overline{\Iend \setminus I} $ refers to the negation of the remaining literals to explain $\{\lnot \ell \mid \ell \in \Iend \setminus I\}$ introduced to compute non-redundant explanation steps with MUSs.

\subsection{Concluding Remarks}\label{sec:motivation:conluding-remarks}

Experiments have shown that such a MUS-based approach can easily take hours, especially when, at every explanation step, repeated MUS calls are performed for each remaining literal to explain (lines \ref{alg:onestep:repetition:mus-call}-\ref{alg:onestep:mus-call} of \cref{alg:oneStep}) to increase the chance of finding a low-cost MUS. 
Furthermore, MUS extraction algorithms do not guarantee of $\subseteq$-minimality or optimality with respect to a given cost function $f$. Therefore, in \citet{bogaerts2021framework}, we handle the absence of $\subseteq$-minimality and optimality guarantees by heuristically considering increasingly larger subsets of the unsatisfiable formula ($\formulac \land I \land \neg l$).

Hence, there is a need for algorithmic improvements to make it more practical.
We see three main points of improvement, all of which are addressed by our generic OCUS algorithm presented in the next section. 

\begin{itemize}
	
\item First, since the algorithm is based on \call{MUS} calls, there is no guarantee that the explanation found is indeed optimal (with respect to the given cost function). Performing multiple MUS calls is only a heuristic used to circumvent the restriction that \textit{there are no algorithms for cost-based unsatisfiable subset \textbf{optimization}}. 

\item Second, this algorithm uses \call{MUS} calls for every literal $ \ell \in \Iend \setminus I$ to explain separately. The goal of all these calls is to find a single unsatisfiable subset of $\formulac \land I \land \overline{(\Iend\setminus I)}$ that contains \emph{exactly one literal} from $\overline{(\Iend\setminus I)}= \{ \lnot \ell \mid \ell \in \Iend \setminus I\}$. This raises the question of whether it is possible \textit{to compute a single (optimal) unsatisfiable subset \textbf{subject to constraints}}, where in our case, the constraint is to include exactly one literal from $\overline{(\Iend\setminus I)}$. 

\item Third, the algorithm that computes an entire explanation sequence makes use of repeated calls to \onestep, and will therefore solve many similar problems. This raises the question of \textit{\textbf{incrementality}: can we reuse the computed data structures to achieve speedups in later calls?}

\end{itemize}

In the next section, we introduce the concept of \emph{Optimal Constrained Unsatisfiable Subsets} to address the first two main points of improvement. We address the third point in \cref{sec:efficient-ocus} along with other optimizations to speed up the generation of explanation \emph{sequence}.

\section{Optimal Constrained Unsatisfiable Subsets}\label{sec:OCUS}

	In this section, we address the first two challenges highlighted at the end of \cref{sec:motivation:conluding-remarks}, namely 
	\begin{inparaenum}[(1)]
		\item  the \emph{lack of optimality guarantees} when relying on MUS extraction methods (and heuristics) to compute an explanation step, and 
		\item whereas the optimal explanation of a single literal can be formalized as an \emph{optimal} MUS (with respect to a given objective), finding the optimal next explanation step over all literals remains an open question. 
	\end{inparaenum}
	 To tackle these, we introduce the concept of an \emph{Optimal Constrained Unsatisfiable Subset (OCUS)} and propose an algorithm for computing one.

\begin{definition}\label{definition:ocus}
Let $\formula$ be a formula, $f:2^{\formula} \to \nat$ a cost function and  $p$ a predicate $p: 2^{\formula} \to \{\mathit{true}, \mathit{false} \}$. 
We call $\m{S} \subseteq \formula$ an \emph{Optimal Constrained Unsatisfiable Subset (OCUS)} of \formula (with respect to $f$ and $p$) if 
\begin{itemize}                                      
\item $\m{S}$ is unsatisfiable,
\item $p(\m{S})$ is true
\item all other unsatisfiable $\m{S}'\subseteq \formula$ for which $p(\m{S}')$ is true satisfy $f(\m{S}')\geq f(\m{S})$.
\end{itemize}
\end{definition}

\begin{proposition}
	Let \emph{\formula} be a CNF formula, ${p}$ be a predicate specified as a CNF over (meta)-variables indicating the inclusion of clauses of \emph{\formula}, and ${f}$ be a cost-function obtained by assigning a weight to each such meta-variable, then the problem complexity of finding an OCUS is \emph{$\Sigma^\emph{P}_2$-complete}.
\end{proposition}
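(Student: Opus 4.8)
The plan is to pin down the natural decision version of the problem---given $\formula$, the predicate $p$, the weights defining $f$, and a bound $k \in \nat$, decide whether there exists an unsatisfiable $\m{S} \subseteq \formula$ with $p(\m{S})$ true and $f(\m{S}) \le k$---and to show that this decision problem is $\Sigma^P_2$-complete; the complexity of \emph{finding} an OCUS is then understood through this decision variant in the usual way.

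For \textbf{membership}, I would observe that the property has the shape $\exists \m{S}\, \forall \alpha\, \varphi(\m{S},\alpha)$ with $\varphi$ polynomial-time checkable. Concretely, one existentially guesses a subset $\m{S}$ (a bit string of length $|\formula|$); checks in polynomial time that $p(\m{S})$ holds (since $p$ is given as a CNF over the inclusion meta-variables, and $\m{S}$ fixes their truth values) and that $f(\m{S}) \le k$ (a weighted sum); and verifies that $\m{S}$ is unsatisfiable, which is a coNP query, i.e.\ a universal quantifier over assignments $\alpha$ asserting that $\alpha$ falsifies some clause of $\m{S}$. An $\exists$ over a polynomial object followed by a coNP predicate is exactly $\Sigma^P_2$.

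For \textbf{hardness}, I would reduce from the canonical $\Sigma^P_2$-complete problem $\exists X\, \forall Y\, \chi(X,Y)$ with the matrix $\chi$ in \emph{DNF} (the DNF form is essential, as it makes the inner $\forall Y$ the genuinely coNP-hard task of DNF-tautology checking). Setting $\Phi := \neg\chi$ yields a CNF, and $\exists X\forall Y\,\chi$ holds iff there is an assignment $\tau$ to $X$ for which $\Phi|_\tau$ is unsatisfiable over $Y$. I then encode this ``choose an $X$-assignment that renders the CNF unsatisfiable'' problem as an OCUS instance: let $\formula$ consist of all clauses of $\Phi$ together with the $2n$ unit clauses $(x_i)$ and $(\lnot x_i)$, one pair per variable of $X$; let $p$ be the CNF over the inclusion meta-variables that forces every clause of $\Phi$ to be present and enforces an \emph{exactly-one} choice between the meta-variables of $(x_i)$ and $(\lnot x_i)$ for each $i$; and let $f\equiv 0$ (all weights zero) with bound $k=0$. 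The $p$-admissible subsets then correspond bijectively to total $X$-assignments $\tau$, the selected subset being $\Phi\wedge\tau$, so an admissible subset is unsatisfiable exactly when $\Phi|_\tau$ is. Hence the instance has an unsatisfiable $p$-admissible subset of cost $\le 0$ iff some $\tau$ makes $\Phi|_\tau$ unsatisfiable iff $\exists X\forall Y\,\chi$, and the construction is clearly polynomial.

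The routine part is membership; the \textbf{main obstacle} is getting the hardness gadget exactly right. Two points need care: (i) choosing the complete problem with the matrix in DNF rather than CNF---with a CNF matrix the inner quantifier collapses and $\exists X\forall Y\,\chi$ is only NP, so the reduction would prove nothing; and (ii) verifying that $p$, written purely as a CNF over inclusion meta-variables, pins down precisely the subsets representing total assignments (all of $\Phi$, exactly one unit literal per variable), together with the equivalence ``$p$-admissible subset unsatisfiable $\iff \Phi|_\tau$ unsatisfiable $\iff \forall Y\,\chi(\tau,Y)$.'' As a sanity check one may note that taking $p\equiv\mathit{true}$ and $f$ the cardinality function recovers the (cardinality-)smallest-MUS problem as a special case, consistent with the claimed hardness.
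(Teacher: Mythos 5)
Your proof is correct, but your hardness argument takes a genuinely different route from the paper's. The paper handles both directions in one line each: hardness by observing that SMUS (the cardinality-minimal MUS problem, already known to be $\Sigma^P_2$-complete) is the special case of OCUS with $p \equiv \mathit{true}$ and $f$ the cardinality function --- exactly the observation you relegate to a closing sanity check --- and containment by an informal appeal to an $\exists\forall$-QBF encoding with a Boolean circuit for the costs. You instead give a self-contained reduction from $\exists X \forall Y\, \chi$ with DNF matrix, using the predicate $p$ itself to encode the existential choice: force all clauses of $\neg\chi$, impose an exactly-one selection among the unit clauses $(x_i)$, $(\lnot x_i)$, and take $f \equiv 0$. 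What your route buys: it needs no black-box citation of the SMUS completeness result, and it yields a slightly sharper conclusion --- the constraint predicate alone, with a trivial cost function, already carries the full $\Sigma^P_2$-hardness, whereas the paper's special case places the hardness in the cost function. What the paper's route buys is brevity and a direct link to the literature. Your membership argument is essentially the paper's containment claim made explicit (guess $\m{S}$, check $p$ and $f$ in polynomial time, verify unsatisfiability as a coNP query), so there is no real divergence there. One small technicality to patch in your gadget: under the paper's set semantics for formulas, a clause of $\neg\chi$ that coincides with one of the $X$-unit clauses would collapse with it and break the meta-variable correspondence (e.g., a single-literal term $x_i$ of $\chi$ yields the clause $(\lnot x_i)$); this is repaired by preprocessing $\chi$ so that every term mentions some $Y$-variable, say by splitting each offending term on a fresh variable $y_0$, which preserves the DNF form and the quantified truth value.
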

\begin{proof}
		If we assume that the predicate $p$ is specified itself as a CNF over (meta)-variables indicating the inclusion of clauses of $\m{F}$, and $f$ is obtained by assigning a weight to each such meta-variable, then the complexity of the problem of finding an OCUS is the same as that of the SMUS (cardinality-minimal or `Smallest' MUS) problem  \cite{ignatiev2015smallest}: the associated decision problem is $\Sigma^P_2$-complete.
		Hardness follows from the fact that SMUS is a special case of OCUS; containment follows - intuitively - from the fact that this can be encoded as an $\exists\forall$-QBF using a Boolean circuit encoding of the costs. 
\end{proof}

In the next sections, we first explain how OCUS can be used to compute an explanation step and then propose an algorithm for computing OCUSs using the well-known hitting set duality between MUSs and MCSs \cite{DBLP:journals/jar/LiffitonS08,ai/Reiter87}.

\subsection{\onestep with OCUS}

\onestep implicitly uses an ``exactly one of'' constraint on the set of literals to explain $ \{\ell \in \Iend \setminus I\}$, and the way this is done is by considering each literal $\ell$ from that set separately. It searches for a good, but not necessarily \emph{optimal} CUS (an \underline{U}nsatisfiable \underline{S}ubset that satisfies the \underline{C}onstraint in question). However, the goal is to find an optimal one.
Therefore, when considering the procedure \onestep from the perspective of OCUS defined above, the task of the procedure is to compute an OCUS of the formula $\formula := \formulac\land I\land \overline{\Iend\setminus I}$ where $p$ is the predicate that holds for subsets containing \emph{exactly one} literal of $\overline{\Iend\setminus I}$. 
The pseudocode for this is shown in \cref{alg:oneStepOCUS}.

\begin{algorithm}[!h]
	\DontPrintSemicolon
	
	\caption{$\onestepo(\formulac,f,I,\Iend)$}
	\label{alg:oneStepOCUS}
$\formula \gets \formulac\land I\land \overline{\Iend\setminus I}$\;
$p \gets (S\mapsto \# (S\cap \overline{\Iend\setminus  I})=1)$\label{alg:oneStepOCUS:exactlyone}\;
$\m{S} , \mathit{status} \gets \call{OCUS}(\formula, f, p)$\label{alg:oneStepOCUS:ocus-call}\;
\lIf{$\mathit{status} \neq \mathit{FAILURE}$}{\Return{$\m{S}$} }
\lElse{\Return{$\emptyset$}}
\end{algorithm}

\paragraph{Explaining 1 Literal per Step.} We define the `exactly one' constraint $p$ as the equality constraint shown on line \ref{alg:oneStepOCUS:exactlyone} of \cref{alg:oneStepOCUS}. $p$ evaluates to \textit{true} if the size of the intersection of the negated literals to explain $\Iend\setminus I$ with the subset $\m{S}$ of $\formula$ considered is equal to 1, otherwise $p$ evaluates to \textit{false}. 

The call to \comus on line \ref{alg:oneStepOCUS:ocus-call} of \onestepo will never lead to the $\mathit{FAILURE}$ case, since an OCUS is guaranteed to exist for the input cost function $f$, predicate $p$ and formula $\formula$ we are considering. Failure may occur for a different predicate $p$. For example, consider the predicate $p: (\m{S} \mapsto f(S) \leq v)$ of \cref{alg:oneStepBoundBart}, which evaluates to true if a subset $\m{S}$ has a value less than a given bound $v$. In this case it can lead to failure if no OCUS exists for a given bound $v$.  

In the next section, we provide the details on how to compute an OCUS on line~\ref{alg:oneStepOCUS:ocus-call}.

\subsection{Computing an OCUS}\label{sec:computing_an_ocus}

In order to compute an OCUS of a given formula, we propose to build on the hitting set duality of \cref{prop:MCS-MUS-hittingset}. 
For this, we will assume to have access to a solver \cohs that can compute hitting sets of a given collection of sets that are \emph{optimal} (w.r.t.~a given cost function $f$) among all hitting sets \emph{satisfying a condition $p$}. 
The choice of the underlying hitting set solver will thus determine which types of cost functions and constraints are possible. 

In our implementation, we use a cost function $f$ which is encoded as a linear term (weighted sum), where e.g. constraints are given a larger weight than already derived literals. 
For example, (unit) clauses representing previously derived facts can be given small weights, and regular clauses can be given large weights, so that explanations are penalized for including clauses when previously derived facts can be used instead.
Condition $p$ can be easily encoded as a linear constraint (see \cref{eq:exactly-one-constraint} for an example), 
thus allowing the use of highly optimized mixed integer programming (MIP) solvers to compute optimal hitting sets. 
In the following, we explain how the conditional optimal hitting set problem \cohs can be encoded into MIP to reason over combinations of clauses and literals (hitting sets) of the unsatisfiable formula.

\paragraph{Hitting Set Problem} Given $\formula$, we define a MIP decision variable $d_c$ for every clause $c\in \formula$, and write  $\decisionvariables = \{d_c \mid c \in \formula\}$ for the set of all such variables. We assume a given collection of sets-to-hit $\setstohit$.
The goal is to find a hitting set $\hittingset \subseteq D$ that hits every set-to-hit at least once (\cref{eq:set-to-hit}), satisfies predicate $p$ (\cref{eq:constraint}), and minimizes $f(D)$ (\cref{eq:minimization}).
The \cohs formulation is as follows:
\begin{align}
\small
\mini_{{\hittingset \subseteq \decisionvariables}} \quad & f(\hittingset) \label{eq:minimization}\\ 
s.t. \quad & p(\hittingset) \label{eq:constraint}\\
  & \sum_{c\in \settohit}\decisionvariable_c \geq 1, \quad & \forall \settohit \in \setstohit \label{eq:set-to-hit}\\
  & \decisionvariable_c \in \{0,1\}, \quad  & \forall \decisionvariable_c \in \decisionvariables     \label{eq:decision-variables}
\end{align}

In the case of \comus, every set-to-hit corresponds to a correction subset of $\formula$. 

\paragraph{OCUS Algorithm} Our generic algorithm for computing OCUSs is depicted in \cref{alg:comus}. It combines the hitting set-based approach for MUSs of \citet{ignatiev2015smallest} with the use of a MIP solver for (weighted) hitting sets as proposed for maximum satisfiability by \citet{DBLP:conf/sat/DaviesB13}. 
The key novelty is the ability to add structural constraints to the hitting set solver, without impacting the duality principles of \cref{prop:MCS-MUS-hittingset}, as we will show.

\begin{algorithm}[!h]
	\DontPrintSemicolon
	$\setstohit  \gets \emptyset$ \; 
	\While{true}{
		$\m{S}, \mathit{status} \gets \cohs(\setstohit,f,p) $ \label{line:opt} \;
		\lIf{$\mathit{status} = \mathit{FAILURE}$}{\Return{$(\emptyset, \mathit{status})$}}

		\If{ $\lnot \sat(\m{S})$ \label{alg:ocus-sat-check}}{
			\Return{$(\m{S}, \mathit{status})$} \;
		}
		$\m{K} \gets \corrsubsets(\subsetT, \formula)$ \label{line:grow}\;
		$\setstohit  \gets \setstohit  \cup \m{K} $  \label{alg:ocus:complement}\;
	}
	\caption{$\comus(\formula,f,p)$ }
	\label{alg:comus}
\end{algorithm}

The algorithm alternates calls to a hitting set solver with calls to a \sat oracle on a subset $\m{S}$ of $\formula$. 
In case the \sat oracle returns true, i.e., subset $\m{S}$ is satisfiable, $\m{K}$ a set of subsets of $F\setminus S$ is returned by \corrsubsets and added to the collection of sets-to-hit \setstohit. 

\paragraph{Smallest MUS of \citet{ignatiev2015smallest}}
The key differences with the SMUS algorithm are the calls to a \cohs solver (resp.~\texttt{MinimumHS}) and the \corrsubsets (resp.~\texttt{grow}) procedure.
In the SMUS algorithm, the purpose of \texttt{grow} is to expand a satisfiable subset $\m{S}$ of $\m{F}$ further such that its complement, a correction subset, is as small as possible.
Shrinking the correction subset as a result of \texttt{grow} finds stronger constraints on the sets-to-hit, since it restricts the choice of clauses to be selected. 

In our algorithm we make use of the \corrsubsets procedure which returns a non-empty \emph{set} of correction subsets. 
In the case of \comus, the calls for hitting sets will also take into account the cost ($f$), and the meta-level constraints ($p$).
As such, It is not clear a priori which properties a good \corrsubsets function should have here.
In \cref{sec:computing-correction-subsets}, we propose different domain-specific methods for enumerating multiple correction subsets. 

For the \emph{correctness} of the algorithm, all we need to know is that for a given satisfiable subset $\m{S}$, \corrsubsets returns a \emph{non-empty} set of correction subsets~$\m{K}$, where $\forall \ \m{C} \in \m{K} : \m{C} \subseteq (F \setminus S)$.
At any given step, if $\m{S}$ is satisfiable, $\setstohit$ is guaranteed to grow, since a non-empty set of correction subsets $\m{K}$ is returned that is disjoint from $\m{S}$. Therefore $\m{K}$ cannot be present in $\setstohit$. 
The \emph{completeness} and \emph{soundness} of the algorithm follow from the fact that the algorithm is guaranteed to terminate since there is a countable number of correction subsets of $\formula$, and from \cref{thm:soundcomplete}, which states that what is returned is indeed a solution and that a solution will be found if it exists.  

\begin{theorem}\label{thm:soundcomplete}
	Let $\m{H}$ be a set of correction subsets of \formula. 
	If $\m{S}$ is a hitting set of $\m{H}$ that is $f$-optimal among the hitting sets of $\m{H}$ that satisfy a predicate $p$, and  $\m{S}$ is unsatisfiable, then $\m{S}$ is an OCUS of $\formula$. 
	If  $\m{H}$ has no hitting sets satisfying $p$, then $\formula$ has no OCUSs.
\end{theorem}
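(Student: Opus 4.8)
The plan is to isolate a single structural fact that drives both halves of the statement: every unsatisfiable subset of \formula must hit every member of \m{H}. I would first prove this as a standalone observation. Take an arbitrary unsatisfiable $\m{S}' \subseteq \formula$ and any $\m{C} \in \m{H}$. Since $\m{C}$ is a correction subset, $\formula \setminus \m{C}$ is satisfiable. If $\m{S}'$ failed to hit $\m{C}$, i.e. $\m{S}' \cap \m{C} = \emptyset$, then $\m{S}' \subseteq \formula \setminus \m{C}$; as any subset of a satisfiable formula is satisfiable (the witnessing model of the larger formula satisfies each of its clauses, hence each clause of the subset), $\m{S}'$ would be satisfiable, contradicting the choice of $\m{S}'$. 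Hence $\m{S}'$ hits every $\m{C} \in \m{H}$ and is a hitting set of \m{H}.

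For the soundness part I would verify the three defining conditions of an OCUS for \m{S}. Unsatisfiability is given by hypothesis, and $p(\m{S})$ holds because \m{S} is assumed $f$-optimal \emph{among} the hitting sets of \m{H} satisfying $p$, so in particular \m{S} is itself one of them. The only real content is the optimality condition: I would take an arbitrary competing unsatisfiable $\m{S}' \subseteq \formula$ with $p(\m{S}')$ true, apply the observation to conclude that $\m{S}'$ is a hitting set of \m{H}, and note that it also satisfies $p$. Therefore $\m{S}'$ lies in exactly the class over which \m{S} is $f$-optimal, giving $f(\m{S}') \geq f(\m{S})$. This exhausts all rivals and shows \m{S} is an OCUS.

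For the second claim I would argue by contradiction: suppose \formula had an OCUS $\m{S}^*$. Then $\m{S}^*$ is unsatisfiable and $p(\m{S}^*)$ holds, so by the same observation $\m{S}^*$ is a hitting set of \m{H} satisfying $p$, contradicting the hypothesis that \m{H} has no such hitting set. Hence \formula has no OCUS.

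The step requiring the most care — and the genuine novelty relative to the classical duality — is the observation, specifically recognizing that it holds for \m{H} being an \emph{arbitrary} collection of correction subsets rather than only the set \mcses{\formula} used in \cref{prop:MCS-MUS-hittingset}. The argument above uses nothing about each $\m{C} \in \m{H}$ beyond its being a correction subset, so it applies to any such collection; this is precisely what justifies the algorithm's incremental accumulation of sets-to-hit via \corrsubsets without ever requiring $\m{H} = \mcses{\formula}$. I would also flag that \m{S} is never assumed to be a \emph{minimal} hitting set: $f$-optimality under the constraint $p$ plays the role that minimality plays in \cref{prop:MCS-MUS-hittingset}, which is exactly why the duality must be re-derived in this cost-and-constraint form rather than quoted directly.
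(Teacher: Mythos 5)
Your proof is correct, and it reaches the paper's conclusion by a slightly different and in fact more self-contained route. The paper establishes the crucial fact---that any rival unsatisfiable $\m{S}' \subseteq \formula$ hits every set in $\m{H}$---by going through the classical duality of \cref{prop:MCS-MUS-hittingset}: a rival hits every \emph{minimal} correction subset of $\formula$ (since it contains a MUS), and every correction subset is a superset of some MCS, hence $\m{S}'$ hits every member of $\m{H}$. You instead prove this fact directly from first principles: if $\m{S}' \cap \m{C} = \emptyset$ for a correction subset $\m{C}$, then $\m{S}' \subseteq \formula \setminus \m{C}$, which is satisfiable, and satisfiability is inherited by subsets---contradiction. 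Your argument thus bypasses both \cref{prop:MCS-MUS-hittingset} and the superset-of-MCS step, which makes the proof shorter, fully elementary, and makes explicit why the theorem holds for an \emph{arbitrary} collection of correction subsets (exactly the point you correctly flag as what licenses the algorithm's incremental accumulation of sets-to-hit). Your handling of the optimality clause is also marginally cleaner: you conclude $f(\m{S}') \geq f(\m{S})$ for every rival directly, matching \cref{definition:ocus} verbatim, where the paper phrases it as deriving $f(\m{S}) = f(\m{S}')$ from a hypothetical $f(\m{S}') \leq f(\m{S})$; these are equivalent. For the second claim the paper appeals to \cref{prop:MCS-MUS-hittingset} again, whereas your explicit contrapositive via the same standalone observation is what that appeal amounts to once unpacked. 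In short: same skeleton, but your key lemma is proved directly rather than quoted via the duality, which buys a self-contained proof at no cost in length.
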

\begin{proof}
For the first claim, it is clear that $\m{S}$ is unsatisfiable and satisfies $p$. Hence all we need to show is the $f$-optimality of $\m{S}$.
If there would exist some other unsatisfiable subset $\m{S}'$ that satisfies $p$ with $f(\m{S}')\leq f(\m{S})$, we know that $\m{S}'$ would hit every minimal correction set of $\m{F}$, and hence also every set in $\m{H}$ (since every correction set is the superset of a minimal correction set).
Since $\m{S}$ is $f$-optimal among hitting sets of $\m{H}$ that satisfy $p$, and since $\m{S}'$ also hits $\m{H}$ and satisfies $p$, it must be that $f(\m{S})=f(\m{S}')$. 

The second claim follows immediately from \cref{prop:MCS-MUS-hittingset} and the fact that an OCUS is an unsatisfiable subset of $\formula$. 
\end{proof}

Perhaps surprisingly, the correctness of the proposed algorithm does \emph{not} depend on the monotonicity properties of $f$ nor $p$. In principle, any (computable) cost function and condition on the unsatisfiable subsets can be used. In practice, however, one is bound by the limitations of the chosen hitting set solver.

As an illustration, we provide an example of one call to \onestepo (\cref{alg:oneStepOCUS}) and the corresponding \comus-call in detail (\cref{alg:comus}) for our running example:
\begin{example}[\textit{continued}]\label{ex:onestep-ocus}
	Recall the 4 previously introduced clauses from our running example $\formularunning := c_1 \wedge c_2 \wedge c_3 \wedge c_4 $ with:
			\[ c_1 := \lnot x_1 \vee \lnot x_2 \vee x_3 \qquad  c_2 := \lnot x_1 \vee  x_2 \vee x_3 \qquad  c_3 := x_1 \qquad c_4 := \lnot x_2 \vee \lnot x_3 \]
Consider a call to \onestepo with $ I = \emptyset$ and $\Iend = \{x_1, \lnot x_2, x_3\}$. We add the following three new clauses, which represent the complement of the literals to be derived $\overline{\Iend\setminus I}$:
\[ \{\lnot x_1\}\qquad  \{x_2\} \qquad\{\lnot x_3\}\]
The cost function $f$ is defined as a linear sum  
\begin{equation}
	f(\m{S}) = \sum_{c \in \m{S}} w_i\cdot c_i
\end{equation} over the following clause weights:
\begin{align*}
	\text{\textbf{\emph{Clause weights:}}} & \ w_{c_1} = 60 \  \qquad w_{c_2}=60  \qquad  \ \ \ w_{c_3}=100 \qquad w_{c_4}=100 \\
	\text{\textbf{\emph{$I \wedge \overline{\Iend \setminus I}$ weights:}}} & \  w_{\lnot x_1}=1  \qquad w_{ x_2}=1 \ \ \ \qquad w_{\lnot x_3}=1 
\end{align*}
We encode the same cost function in the MIP encoding as a weighted sum using the corresponding decision variables as follows:
\begin{equation}
	f(\m{S}) = \sum_{c \in \m{S}} w_c\cdot d_{c} 
\end{equation}

		To ensure that we only explain one literal at a time, we add predicate $p$ as
		\begin{equation}\label{eq:cost-function}
		p(S) := \# (S\cap \{\lnot x_1, x_2, \lnot x_3\})=1
		\end{equation}
	The MIP encoding of the predicate $p$ corresponds to
		\begin{equation}
		p(h):= \sum_{c \in \overline{\Iend \setminus I}} d_c = \decisionvariable_{\{\lnot x_1\}} + \decisionvariable_{\{x_2\}} + \decisionvariable_{\{\lnot x_3\}} = 1
		\label{eq:exactly-one-constraint}
	\end{equation}		

At \cref{alg:oneStepOCUS:ocus-call} of \onestepo, $\m{F}$ is constructed, consisting of :

\begin{equation}
\formula = \formularunning \land I \land \overline{(\Iend\setminus I)} =  c_1 \wedge \dots \wedge c_{4} \wedge \lnot x_1 \wedge  x_2 \wedge  \lnot x_3	
\label{eq:explain-one-step-ocus}
\end{equation}

	Finally, to generate an explanation step, we call \comus on formula $\formula$ with the given cost function $f$ and \emph{exactly-one} constraint $p$:
	\begin{equation}\comus( \formula, f, p)\end{equation}

	In this small example, the $\corrsubsets(\m{S}, \formula)$ procedure simply returns $\{F\setminus S\}$. 

\begin{table}[!h]
	\centering
	\begin{adjustbox}{max width=\textwidth}
		\begin{tabular}{lccc} 
			Step & $\m{S}$                             & $\sat(\m{S})$ & $\setstohit  \gets \setstohit  \cup  \corrsubsets(\m{S}, \formula) $ \\ 
			\toprule[2pt]
			&        &    &      $\emptyset$                                \\
			\midrule
			1. & $\{ \lnot x_3\}$   &$\mathit{true}$ &  $\{\{c_1, c_2, c_3, c_4, \lnot x_1, x_2 \}\}$\\\midrule
			2. & $\{ \lnot x_1\}$   &$\mathit{true}$ &   $\{  \dots, \{ c_1, c_2, c_3, c_4, x_2, \lnot x_3 \}\}$\\
			\midrule
			3. & $\{ x_2\}$   		&$\mathit{true}$&  $\{ \dots, \{c_1, c_2, c_3, c_4, \lnot x_1, \lnot x_3 \}\}$\\
			\midrule
			4. & $\{ c_1,  x_2\}$   	&$\mathit{true}$ &$\{ \dots, \{c_2, c_3, c_4, \lnot x_1, \lnot x_3 \}\}$\\
			\midrule
			5. & $\{c_2, x_2\}$ 	&$\mathit{true}$ &$\{ \dots, \{c_1, c_3, c_4, \lnot x_1, \lnot x_3 \}\}$\\
			\midrule
			6. & $\{c_1, \lnot x_3\}$ 	&$\mathit{true}$ &$\{ \dots, \{c_2, c_3, c_4, \lnot x_1, x_2 \}\}$\\
			\midrule
			7. & $\{c_2, \lnot x_1\}$ 	&$\mathit{true}$ &$\{ \dots, \{c_1, c_3, c_4,x_2,  \lnot x_3 \}\}$\\
			\midrule
			8. & $\{c_1, \lnot x_1\}$ 	&$\mathit{true}$ &$\{ \dots, \{c_2, c_3, c_4, x_2, \lnot x_3 \}\}$\\
			\midrule
			9. & $\{c_2, \lnot x_3 \}$ 	&$\mathit{true}$ &$\{ \dots, \{c_1, c_3, c_4, \lnot x_1, x_2 \}\}$\\
			\midrule
			10. & $\{ c_4, x_2\}$ 	&$\mathit{true}$ &$\{ \dots, \{c_1, c_2, c_3, \lnot x_1, \lnot x_3 \}\}$\\
			\midrule
			11. & $\{ c_3, x_2\}$ 	&$\mathit{true}$ &$\{ \dots, \{c_1, c_2, c_4, \lnot x_1, \lnot x_3 \}\}$\\
			\midrule
			12. & $\mathbf{\{ c_3, \lnot x_1\}}$ 	&$\mathit{false}$  & \\
			
	\end{tabular}\end{adjustbox}
	\caption{
		\cref{ex:onestep-ocus} - Intermediate steps when computing an \comus for \onestepo where $\corrsubsets(\m{S}, \formula) = \{\formula \setminus \m{S}\}$.	}
	\label{tab:example-explanation-generation-not-incremental}
\end{table}

	\cref{tab:example-explanation-generation-not-incremental} breaks down the intermediate steps of algorithm \ref{alg:comus} for generating an OCUS of given $\formula$, $f$ and $p$. 
	First, the collection of sets-to-hit \setstohit is initialized as the empty set. At each iteration, the hitting set solver searches for a cost-minimal assignment that hits all sets in \setstohit and that contains exactly one of $\{\lnot x_1, x_2, \lnot x_3\}$ (due to $p$). If the hitting set is unsatisfiable, it is guaranteed to be an OCUS.
	
 The first iterations show that the \comus algorithm first hits all the literals of $\overline{\Iend\setminus I}$ (steps 1-3 of \cref{tab:example-explanation-generation-not-incremental}) and then starts combining one literal of $\overline{\Iend\setminus I}$ 
  with the remaining clauses until an OCUS is found (step 12). 
  Finally, step 12 of \cref{tab:example-explanation-generation-not-incremental} signifies that using clause $c_3$ we can derive $x_1$, and more formally $c_3 \implies x_1$.

\end{example}

	\paragraph{Incremental MIP Solver} The \comus algorithm requires repeatedly computing hitting sets over an increasing collection of sets-to-hit. Initializing the MIP solver once and keeping it warm throughout the \comus iterations allows it to reuse information from previous solver calls to solve the current hitting set problem. Similar to \citet{DBLP:conf/sat/DaviesB13}, we notice a speed-up between 3 to 5 times by keeping the solver warm\footnote{We use the same setup as in the experiment section: a single core on a 10-core INTEL Xeon Gold 61482 (Skylake), a memory-limit of 8GB. The code is written on top of PySAT 0.1.7.dev1 \cite{pysat}, for MIP calls, we used Gurobi 9.1.2, and for the SAT calls MiniSat 2.2.} compared to initializing a new MIP solver instance at every iteration.

	As can be seen in \cref{tab:example-explanation-generation-not-incremental}, the \comus algorithm requires many intermediate steps to find an OCUS. 
	Note, for example, that the computed correction subsets contain more than 1 literal to explain that are not relevant for explaining the literal in the current hitting set. 
	Taking step 2 as an example, if $\{\lnot x_1\}$ is a hitting set, its corresponding correction subset $ \{ c_1, c_2, c_3, c_4, x_2, \lnot x_3 \}$ contains $\{x_2, \lnot x_3\}$ which cannot be taken. 

	Even though our running example has a rather small number of clauses, \comus needs to combine an increasingly large number of literals and clauses to find an OCUS. 
	Next, we investigate how to efficiently grow a given satisfiable subset in order to reduce the size of the corresponding correction subset. By imposing stronger restrictions on the hitting sets, we will be able to reduce the number of sets to hit.
	
	\subsection{Computing Correction Subsets}\label{sec:computing-correction-subsets}
	
	The \corrsubsets procedure generates a set of correction subsets starting from a given satisfiable subset $\subsetT$ of an unsatisfiable formula $\formula$. 
	However, calling \onestepo on our running example has shown that a naive `No grow':
	\begin{equation}\label{corrsubsets-no-grow}
		\corrsubsets(\subsetT, \formula) \gets \{\formula \setminus  \subsetT \}
	\end{equation}
	drastically increases the number of sets-to-hit required compared to using the model provided by the \sat solver. 
	This last observation suggests that the satisfiable subset $\m{S}$ should be \emph{efficiently grown} into a \emph{larger satisfiable subset} before computing the complement:
	\begin{equation}\label{corrsubsets-with-grow}
		\corrsubsets(\subsetT, \formula) \gets \{\formula \setminus  \grow(\subsetT, \formula) \}
	\end{equation}
	
	\subsubsection{Growing satisfiable subsets using domain-specific information} \label{para:domainspecificgrow}
	
	The goal of the \grow phase of the \corrsubsets procedure (see \cref{corrsubsets-with-grow}) is to turn $\m{S}$ into a larger satisfiable subformula of $\formula$. 
	The effect of this is that the complement added to \setstohit will be smaller, and hence imposes stronger restrictions on the hitting sets.
	
	There are multiple conflicting criteria that determine what makes an effective `grow' procedure.
	On the one hand, we want our subformula to be as large as possible (which would ultimately correspond to computing a maximal satisfiable subformula), 
	but on the other hand, we also want the procedure to be very efficient, as it is called in every iteration. 
	
	In the case of explanations, we make the following observations: 
	\begin{itemize}
		\item Our formula at hand (using the notation from the \onestepo algorithm) consists of three types of clauses: 
		\begin{inparaenum}[(1)]
			\item (translations of) the problem constraints (this is \formulac) 
			\item literals representing the assignment found (this is $I$), and 
			\item the negations of literals not yet derived (this is $\overline{\Iend\setminus I}$). 
		\end{inparaenum}
		\item $\formulac$ and $I$ together are satisfiable, with assignment $I_{end}$, and \emph{mutually supportive}, by this we mean that making more clauses in \formulac true, more literals in $I$ will automatically become true and vice versa. 
		\item The constraint $p$ enforces that each hitting set will contain \textbf{exactly} one literal of  $\overline{\Iend\setminus I}$
	\end{itemize}
	
	Since the restrictions on the third type of elements of $\formula$ are already strong, it makes sense 
	to search for a \emph{maximal} satisfiable subset of $\formulac\cup I$ with hard constraints that $\m{S}$ should be satisfied, using a call to an efficient (partial) \maxsat~solver. 
	
	Furthermore, we can initialize this call as well as any call to a \sat solver with the polarities for all variables set to the value they take in $\Iend$. 
	
	We evaluate different grow strategies as part of the $\corrsubsets(\m{S}, \formula)$ procedure in the experiments section including:
	\begin{description}
		\item[\sat] extracts a satisfying model from the \sat solver to turn $\m{S}$ into a larger satisfiable subset. This grow will be considered the baseline for comparing the other grow-variants.
		\item[\subsetmaxsat] extends the satisfiable subset computed by \sat by looping over every remaining clause $c \in \formula \setminus \m{S}$.  If $\m{S} \cup \{c\}$ is satisfiable, then clause $c$ is added to $\m{S}$ as well as any other clause from $c' \in \formula \setminus \m{S} \cup \{c\}$ that is satisfied in the model found by the SAT solver. 
		\item[\domspecmaxsat] grows a satisfiable subset $\m{S}$ with a \maxsat~solver using only the previously derived facts and the original constraints. 
		\item[\maxsatfull] grows satisfiable subset $\m{S}$ with a \maxsat~solver using the full unsatisfiable formula $\mathcal{F}$.
	\end{description} 
	In the experiments, we omit the `No grow' procedure where the complement $\{\formula \setminus \m{S}\}$ is returned by \corrsubsets. Not growing produces large sets-to-hit as seen in \cref{tab:example-explanation-generation-not-incremental} of the running example.
	Additional experiments comparing the effectiveness of the naive `No grow' to growing with the \sat solver procedure shows that `No grow' leads to significantly longer \comus runtimes.

	Finally, \cref{tab:example-explanation-generation-not-incremental} showed that \comus has to combine an increasingly large number of literals and clauses. In the next section, we analyse whether we can break the more general OCUS problem into smaller subproblems, similar to \cref{alg:oneStep}, where instead of searching for a MUS, we search for an \emph{Optimal Unsatisfiable Subset} (OUS) and select the best one.

\section{Multiple Optimal Unsatisfiable Subsets}\label{sec:OUS}

Preliminary experiments have shown that most of the time ($\sim90\%$ of the time) is spent searching for hitting sets when generating an explanation step with \comus. 
The main reason for this is that the hitting set solver needs to consider an increasingly large collection of sets-to-hit, potentially searching over an exponential number of literals and clauses (see Table~\ref{tab:example-explanation-generation-not-incremental} of \cref{ex:onestep-ocus}).

In this section, we first analyze if instead of working OCUS-based, we can split up the \comus-call into individual calls that compute Optimal Unsatisfiable Subsets (OUSs) for every literal by replacing a MUS call to OUS in \cref{alg:oneStep}.

\subsection{Bounded OCUS}

Since OUS, without an additional predicate, is a special case of OCUS, we can take advantage of \cref{prop:MCS-MUS-hittingset} and reuse the \comus algorithm with a trivially true $p$, i.e., 
$\omus{}(\formula,f):=\comus{}(\formula,f,\ltrue) $ for each $\formula$ and~$f$. 
However, the switch from \texttt{MUS} to \omus{} in \cref{alg:oneStep} still requires looping over every literal and computing the OUS, potentially introducing overhead compared to the single \comus call.
However, we can use the OUS obtained in one iteration, to infer a bound on the score that must be achieved in subsequent OUS calls.

\paragraph{Upper Bound} Every MUS or OUS computed at \cref{alg:onestep:mus-call} of \cref{alg:oneStep} provides an upper bound on the cost, which should be improved in the next iteration.
By keeping track of the best candidate explanation, its corresponding cost can be considered the current best upper bound on the cost of the OUSs of the remaining literals to explain.

\paragraph{Lower Bound} Every hitting set computed inside the \omus{} algorithm produces a lower bound on the best cost that can be obtained, even the satisfying ones. Indeed, the candidate hitting set returned on line~\ref{line:opt} of \cref{alg:comus} is guaranteed to be the lowest-cost one. Consequently, the cost of the best candidate explanation so far can be used as an early stopping criterion: if the cost of the current hitting set is larger than the cost of the best explanation so far, \omus{} will not be able to find a better (cheaper) unsatisfiable subset $\m{S}$ for that literal.

In fact, such a \emph{bounded OCUS} call is naturally obtained by doing an \comus call with as constraint $p(S):= f(\m{S}) \leq f(\m{S}_{best})$.

\paragraph{Bounded OCUS-based Explanations} \onestepomusbounded{} in \cref{alg:oneStepBoundBart} uses calls to the \comus algorithm for every individual literal to compute the next best explanation step. The algorithm keeps track of the current best OCUS candidate $\m{S}_{best}$.
This $\m{S}_{best}$ is only updated if the \comus algorithm is able to find an OCUS that is cheaper than the current upper bound $f(\m{S}_{best})$. 
Predicate $f(\m{S}) \leq f(\m{S}_{best})$ of the \comus-call at line \ref{alg:oneStepBound:cost_check} of \cref{alg:oneStepBoundBart} allows us to ensure that the cost of the hitting set does not exceed the upper bound. In case it does happen, the hitting set solver will return a failure message meaning that a better candidate explanation cannot be computed for that literal given the current interpretation $I$.

\begin{algorithm}[!ht]
	\DontPrintSemicolon
	\caption{$\onestepomusbounded(\formulac,f,I,\Iend)$}
	\label{alg:oneStepBoundBart}
	$\m{S}_{best} \gets \mathit{nil}$ \;
	${\cal S}^\ell_{best} \gets \mathit{nil}$ for each $\ell$ \quad \quad (or from previous iteration) \;
	\For{$\ell \in \{\Iend \setminus I\}$ sorted by $f({\cal S}^\ell_{best})$\label{alg:oneStepBoundBart:sorting}}{
		${\cal S}^\ell_{best}, \mathit{status} \gets \call{OCUS}{( (\formulac \land I \land \neg l), f, f(\m{S}) \leq f(\m{S}_{best}) )}$\label{alg:oneStepBound:cost_check}\;
		\If{$\mathit{status} \neq \mathit{FAILURE}$ }{
			$\m{S}_{best} \gets \m{S}^\ell_{best}$\;
		}
	}
	\Return{${\cal S}_{best}$} 
\end{algorithm}

\paragraph{Literal Sorting} Obtaining a good upper-bound quickly can further reduce runtime. 
We can heuristically aid this by keeping track of $S^\ell_{best}$ across explanation steps, and then use its score $f(S^\ell_{best})$ to sort the literals at line~\ref{alg:oneStepBoundBart:sorting}. The literal sorting ensures we first try the cheapest $S^\ell_{best}$ from a previous explanation step since these are more likely to provide a good upper bound on the cost of the next candidate OCUS.

\subsection{Interleaving OCUS Calls for Different Literals: a Special-case OCUS Algorithm} 
The case to avoid for \onestepomusbounded{} is that an \comus call for a literal takes many hitting set iterations, and returns an `expensive' OCUS with lower-cost OCUSs to be found for other literals. 

Conceptually, one should only do hitting set iterations for the most promising literal, one that is most likely to produce an OCUS with the lowest cost. 
Indeed, this is what the original \comus algorithm with an `exactly one of' constraint is built for: to choose freely among all possible hitting sets across the different literals in order to find the globally optimal next candidate OUS. 

For this special case, where the constraint $p$ is that exactly one of a set of literals must be chosen, we can manually decompose the problem to iteratively search for the best hitting set across the independent problems.
In such an approach, we do not repeatedly call (bounded) \comus until optimality, but do one hitting-set iteration at a time; each time continuing with one hitting-set iteration of the most promising literal. 

This is shown in \cref{alg:oneStepIter}. Every literal to explain $\ell$ is associated with:~\begin{inparaenum}[(1)]
	\item its current \emph{collection of sets to hit} ${\setstohit_{\ell}}$; and
	\item corresponding \emph{optimal hitting set} ${\m{S}_\ell}$ (initially the empty set for both) as well as
	\item it's corresponding \emph{cost} ${f(\m{S}_\ell)}$.
\end{inparaenum}
These are stored in a priority queue, sorted by the cost.

\begin{algorithm}[!ht]
	\DontPrintSemicolon
	\caption{$\iterativeonestep(\formulac,f,I,\Iend)$}
	\label{alg:oneStepIter}
	$queue \gets$ \texttt{InitializePriorityQueue($(\ell, \emptyset,\emptyset): 0 \mid \forall \ell \in \Iend \setminus I$)}\; 
	\While{$(\ell, \m{S}_\ell, \setstohit_{\ell}) \gets queue.pop()$}{
			\If{ $\lnot \sat(\m{S}_\ell)$}{\label{alg:onestepIter-sat-check}
				\Return{$\m{S}_\ell$} \;
			}
		
			$\m{K} \gets \corrsubsets(\subsetT_\ell, (\formulac \land I \land \neg \ell))$\;
		$\setstohit_\ell  \gets \setstohit_\ell  \cup \m{K} $\;
		$\m{S}_\ell \gets OptHittingSet(\setstohit_\ell,f) $  \;
			\texttt{queue.push($(\ell, \m{S}_\ell, \setstohit_{\ell}): f(\m{S_\ell})$)}\;
		}
	\end{algorithm}

\iterativeonestep repeatedly extracts the best literal-to-explain and corresponding hitting set out of the priority queue. Similar to Algorithm~\ref{alg:comus}, if the corresponding hitting set is unsatisfiable, it is guaranteed to be the cost-minimal OUS. 
This is because the queue ensures that this hitting set is the lowest scoring hitting set across all literals and because each hitting set is guaranteed to be an optimal hitting set of its collected sets-to-hit $\setstohit_\ell$. 
If, on the other hand, the hitting set is satisfiable, a number of correction subsets are extracted from the literal-specific unsatisfiable formula and added to its respective collection of sets to hit. 
Finally, a new hitting set is computed and this information is pushed back into the priority queue. 

	The intermediate steps of \comus depicted in Table~\ref{tab:example-explanation-generation-not-incremental} of \cref{ex:onestep-ocus} shows that \comus needs to consider \emph{many combinations of clauses and literals} for \emph{all literals to explain}. 
	Whereas \comussplit reasons over a smaller unsatisfiable formula containing literals relevant for the literal to explain, and only expands the most promising literal.
	In the experiments, we compare which \onestep-* configuration  (\comus, \comusbound, or \comussplit) is the fastest for computing explanations.

 	In the following section, we analyse how to exploit the fact that OCUS and its variants have to be called repeatedly on an unsatisfiable formula that is incrementally extended when generating a sequence of explanations.
	We then consider how to reduce the number and size of sets-to-hit, e.g. to include only information relevant to each literal-to-explain, and how to generate small ($p$-)disjoint correction subsets from a given hitting set.
 	
\section{Efficiently Computing Optimal Explanations}\label{sec:efficient-ocus}

	Up until now, we have investigated how to speed-up the generation of an explanation step from the perspective of OCUS as an oracle.
	In the following, we discuss optimizations applicable to the \texttt{O(C)US} algorithms that are specific to explanation sequence generation, though they can also be used when other forms of domain knowledge are present. 

\subsection{Incremental OCUS Computation}\label{sec:ocusEx}
Inherently, generating a sequence of explanations still requires many O(C)US calls. 
Indeed, a greedy sequence construction algorithm 
calls an \onestep variant repeatedly with a growing interpretation $I$ until $I=\Iend$.
All of these calls to \onestep, and hence O(C)US, are done with very similar input (the set of constraints does not change, and the $I$ slowly grows between two calls). For this reason, it makes sense that information computed during one of the earlier stages can be useful in later stages as well. 
The main question is: 
\begin{quote}
	Suppose two \comus calls are done, first with inputs $\formula_1$, $f_1$, and $p_1$, and later with $\formula_2$, $f_2$, and $p_2$; \emph{how can we make use as much as possible of the data computations of the first call to speed-up the second call?}
\end{quote}
 The answer is surprisingly elegant. The most important data \comus keeps track of is the collection \setstohit of correction subsets that need to be hit.

\subsubsection{Bootstrapping $\setstohit$ with Satisfiable Subsets}\label{sec:bootstrapping-sat-subsets} This collection in itself is not useful for transfer between two calls, since -- unless we assume that $\formula_2$ is a subset of $\formula_1$ -- there is no reason to assume that a set in $\setstohit_1$ is also a correction subset of $\formula_2$ in the second call. 
However, each set $H$ in $\setstohit$ is the complement (with respect to the formula at hand) of a \emph{satisfiable subset} of constraints, and each subset of a satisfiable subset is satisfiable as well.
Thus, instead of storing $\setstohit$, we can keep track of a set of \emph{satisfiable subsets} \satsets ; as the intermediate results of calls to \corrsubsets.

When a second call to \comus is performed, we can then initialize $\setstohit$ as the complement of each of these satisfiable subsets with respect to $\formula_2$, i.e.,
 \begin{equation}
 \setstohit\gets \{\formula_2\setminus \m{S}\mid \m{S}\in \satsets\}.
 	\end{equation}
The effect of this is that we \textit{bootstrap} the hitting set solver with an initial set $\setstohit$. 

\subsubsection{Incrementality with MIP}\label{sec:mip-incrementality} 

For hitting set solvers that natively implement incrementality, such as modern Mixed Integer Programming (MIP) solvers, we can generalize this idea further: we know that all calls to $\comus(\formula,f,p)$ will be cast with $\formula \subseteq \m{C}\cup \Iend \cup \overline{\Iend \setminus I_0}$, where $I_0$ is the start interpretation. 
To compute the conditional hitting set for a specific $\formulac\cup I\cup \overline{\Iend\setminus I} \subseteq \m{C}\cup \Iend \cup \overline{\Iend \setminus I_0}$, we need to ensure that the hitting set solver only uses literals in $\formulac\cup I\cup \overline{\Iend\setminus I}$. 
For incremental hitting set solvers, this means updating the constraint $p$ at every explanation step to include (1) only literals from interpretation $I$ at the current explanation step, and (2) the `exactly-one' constraint for explaining one literal at a time.

Since our implementation uses a MIP solver for computing hitting sets (see Section~\ref{sec:background}), and we know the entire formula from which elements must be chosen,
we initialize the MIP solver once with all relevant decision variables of $\m{C}\cup \Iend \cup \overline{\Iend \setminus I_0}$. 

Bear in mind that retracting a constraint $p$ to replace it with an updated one (in the next explanation call) is non-trivial for MIP solvers. 
Therefore, we assign an infinite weight in the cost function to all literals of $\Iend\setminus I$ and update their weights as soon as they have been derived according to the given cost function.
In this way, the MIP solver will automatically maintain and reuse previously found sets-to-hit in each of its computations. 

Next, we investigate how to speed-up the generation of an OCUS using an appropriate \corrsubsets method when domain-specific information is available. Through our running example we will look at the impact of incremetality and a better \corrsubsets procedure.

\subsubsection{Efficiently Generating an Explanation Sequence with Incremental OCUS} \label{sec:ocus-incr}
In the following example, we illustrate the efficiency of \textit{incrementality with MIP} together with the \emph{\sat grow} to speed up generating an OCUS-based explanation sequence.

\begin{example}[\textit{continued}]\label{ex:ocus-incr:expl-seq} 
	Consider the previously introduced clauses of our running example $\formularunning$: 
	 \[ c_1 := \lnot x_1 \vee \lnot x_2 \vee x_3 \qquad  c_2 := \lnot x_1 \vee  x_2 \vee x_3 \qquad  c_3 := x_1 \qquad c_4 := \lnot x_2 \vee \lnot x_3 \]
	To define the input for the MIP-incremental \comus with initial interpretation $\m{I}=\emptyset$, we extend~ $\formularunning$ with the new clauses representing the final interpretation $\Iend =  \{ \{ x_1\},\  \{\lnot x_2\},\ \{ x_3\}\}$ and the complement thereof $\overline{\Iend \setminus \m{I}} =  \{ \{\lnot x_1\},\    \{ x_2\}, \   \{\lnot x_3\}\}$.
	For the MIP-incremental variant of \comus, $p$ remains the same. The cost function $f_I $ is defined as a weighted sum over the following weights:
	\begin{align*}
	\text{\textbf{\emph{Clause weights:}}} & \ w_1 = 60 \  \qquad w_2=60  \qquad  \ \ \ w_3=100 \qquad w_4=100 \\
	\text{\textbf{\emph{$I \wedge \overline{\Iend \setminus I}$ weights:}}}& \  w_{\lnot x_1}=1  \qquad w_{ x_2}=1 \ \ \ \qquad w_{\lnot x_3}=1 \\
	\text{\textbf{\emph{$\Iend \setminus I$ weights:}}}& \  w_{x_1}=\infty  \qquad w_{\lnot x_2}=\infty \qquad w_{x_3}=\infty
	\end{align*}
	Note how the literals that haven't been derived yet ($\Iend \setminus I$) are given an infinite weight according to \cref{sec:mip-incrementality} for incrementality purposes.
	Therefore, $f_I$ will be updated at every explanation step whenever interpretation $I$ changes.
	The incremental \comus-call is now: 	
	\begin{equation}
	\comus( \formulac\land \textbf{\Iend} \land \overline{(\Iend\setminus I)}, f_I, p)		
	\end{equation}
	In this example, the $\grow$ procedure uses the model provided by the \sat solver to grow a given subset $\m{S}$. 
	The \corrsubsets procedure simply returns
	\begin{equation} 
		\m{K} = \corrsubsets(\m{S}, \formula) = \{\formula \setminus \grow(\m{S}, \formula)\}
			\end{equation}

 The following tables (Tables \ref{tab:example-explanation-generation:incr:step1} to \ref{tab:example-explanation-generation:incr:step3}) summarize the intermediate steps to compute an OCUS-based explanation sequence for our running example. 
The literals that cannot be selected by the hitting set solver have been struck out because they have not been derived yet. 

\begin{table}[!ht]
	\centering
	\begin{adjustbox}{max width=\textwidth}
	\begin{tabular}{lcccc} 
		Step & $\m{S}$                             & $\sat(\m{S})$    &           \textit{$\grow(\m{S}, \formula)$}                     & $\setstohit  \gets \setstohit  \cup \m{K}$ \\ 
		\toprule[1pt]
		&    \phantom{$\{ c_1, c_2, x_1, \lnot x_3 \}$}    &    &      & $\emptyset$                                \\

		1. & $\{\lnot x_3\}$                     & $\mathit{true}$  & $\{c_1 , c_2 , c_4, \lnot x_1 , \lnot x_2 ,  \lnot x_3 \}$ & $\{ \{c_3, \cancel{x_1} , x_2 , \cancel{x_3} \}\}$   
		                            \\  		\midrule
		2. & $\{x_2\}$                     & $\mathit{true}$  & $\{c_1, c_2, c_3, x_1, x_2, x_3 \}$ & $\{ ..., \{c_4, \lnot x_1 ,\cancel{\lnot x_2} , \lnot x_3 \}\}$                               \\  
		\midrule 
		3. & $\{ \lnot x_1\}$                    & $\mathit{true}$  & $\{c_1, c_2, c_4, \lnot x_1, \lnot x_2, x_3 \}$            & $\{ ..., \{c_3, \cancel{x_1}, x_2, \lnot x_3 \}\}$                                \\
		\midrule 
		4. & $\{c_4, x_2\}$                      & $\mathit{true}$  & $\{  c_1 , c_2 , c_4, \lnot x_1 , x_2 , \lnot x_3\}$       & $\{ ..., \{ c_3, \cancel{x_1} , \cancel{\lnot x_2} , \cancel{x_3} \}\}$                        \\ 
		\midrule 
		5. & $\{c_3, \lnot x_3\}$                      & $\mathit{true}$  & $\{  c_1, c_3, c_4, x_1 ,\lnot x_2 , \lnot x_3\}$       & $\{ ..., \{c_2, x_2, \cancel{x_3},  \lnot x_1 \}\}$                        \\ 
		\midrule
		6. & $\{c_3, \lnot x_1\}$  & $\mathit{false}$ & - & - \\
	\end{tabular}
\end{adjustbox}
	\caption{\textit{Example (continued)}. Explanation step 1 ($\mathbf{c_3 \implies x_1}$) of the explanation sequence generated with \onestepo with incremental MIP solving (see \cref{sec:mip-incrementality}). }
	\label{tab:example-explanation-generation:incr:step1}
\end{table}
\begin{observation}\textbf{(An effective grow)}
 The most striking aspect of \cref{tab:example-explanation-generation:incr:step1} compared to \cref{tab:example-explanation-generation-not-incremental} is the number of steps required to find an OCUS that is greatly reduced, i.e. from 12 steps to 6, as a result of choosing an effective grow. 
\end{observation}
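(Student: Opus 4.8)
The plan is to treat this observation as a claim with a verifiable numeric part and a causal part, and to substantiate each separately. The numeric part, that the step count drops from $12$ to $6$, I would establish by direct inspection: \cref{tab:example-explanation-generation-not-incremental} lists iterations $1$ through $12$ and \cref{tab:example-explanation-generation:incr:step1} lists iterations $1$ through $6$, where each row is one pass through the main loop of \comus (\cref{alg:comus}) --- a \cohs call followed by a \sat check --- and the final row is the one on which the returned hitting set is unsatisfiable and hence an OCUS. I would also note that both runs terminate with the \emph{same} OCUS $\{c_3,\lnot x_1\}$, so the grow does not change the computed answer, only the number of iterations needed to reach it; that this common answer is optimal in both runs follows from \cref{thm:soundcomplete}, since in each case \cohs returns an $f$-optimal hitting set, satisfying the exactly-one predicate $p$, of a collection of genuine correction subsets.

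The causal part is the more delicate claim, because the two tables differ along two axes at once: \cref{tab:example-explanation-generation-not-incremental} uses the naive `No grow' rule $\corrsubsets(\m{S},\formula)=\{\formula\setminus\m{S}\}$, whereas \cref{tab:example-explanation-generation:incr:step1} uses both the \sat grow strategy and an incremental MIP solver. To attribute the reduction to the grow, I would first argue that incrementality is \emph{step-count neutral} here. Keeping the MIP solver warm affects only the per-iteration running time, not which hitting set is returned, because the $f$-optimal hitting set of a fixed \setstohit under fixed $f$ and $p$ is determined by \setstohit, $f$ and $p$ alone. Likewise, the extra clauses of \Iend introduced for cross-call reuse are given infinite weight and can therefore never be selected, so they do not enlarge the set of feasible hitting sets. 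Finally, since \cref{tab:example-explanation-generation:incr:step1} is explanation \emph{step~1}, there is no earlier \comus call from which \setstohit could have been bootstrapped, so \setstohit starts empty in both runs. Hence the only ingredient that actually differs in the two iteration sequences is the \corrsubsets procedure.

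With incrementality neutralised, the remaining task is to explain \emph{why} a better grow shortens the loop, which I would do by comparing the two sequences of sets-to-hit directly. Each satisfiable iteration contributes exactly one correction subset to \setstohit, so the iteration count equals the number of correction subsets that must accumulate before the returned $f$-optimal hitting set satisfying $p$ is itself unsatisfiable. Under `No grow' each added set is the full complement $\formula\setminus\m{S}$, which is large, and a large set-to-hit imposes only a weak constraint because its covering inequality $\sum_{c\in\settohit}\decisionvariable_c\geq 1$ from \cref{eq:set-to-hit} is easy to satisfy; consequently many such sets are needed. Since $\grow$ only ever enlarges its argument, we have $\formula\setminus\grow(\m{S},\formula)\subseteq\formula\setminus\m{S}$ for every satisfiable $\m{S}$, so the \sat grow can only shrink the sets handed to the hitting-set solver, tightening each constraint and pruning the search faster; this monotonicity is the formal content of the informal ``stronger restriction'' claim. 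I expect the main obstacle to be precisely this causal isolation --- making the ``stronger restriction'' argument rigorous and ruling out that the incremental machinery rather than the grow drives the drop --- which the neutrality argument above is designed to settle, after which the $12$-versus-$6$ comparison is simply read off the two tables.
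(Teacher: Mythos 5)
Your proposal is correct and takes essentially the same route as the paper: the observation is justified purely by direct inspection of the two tables of the worked example, reading off $12$ versus $6$ iterations of \comus, both runs terminating in the same OCUS $\{c_3,\lnot x_1\}$. Your extra care in neutralising the incrementality confound (empty \setstohit at explanation step 1, infinite-weight clauses of $\Iend\setminus I$ never selectable by the hitting-set solver) and the per-set monotonicity $\formula\setminus\grow(\m{S},\formula)\subseteq\formula\setminus\m{S}$ merely make explicit what the paper leaves implicit, so the attribution of the drop to the grow is the same in both.
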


For the next explanation step, since $x_1$ has been explained, we adapt the weights of the clauses $\{x_1\}$ and $\{\lnot x_1\}$ to $w_{x_1} = 1$ and $w_{\lnot x_1} = \infty$ respectively. 

\begin{observation}\textbf{(Incrementality)}
In the intermediate \comus steps of explanation step 2 (\cref{tab:example-explanation-generation:incr:step2}), we observe the effect of incrementality from the number of steps required to find an OCUS. 
Recall that in the MIP setting,  \formula is constructed overall literals of $\Iend$ and $\overline{\Iend \setminus I}$, and hence stays the same throughout all explanation steps. Therefore, the correction subsets of the previous explanation steps can be reused as is.
Using the previously computed sets-to-hit ensures that the \comus algorithm starts from a good candidate OCUS. If the same \comus-call is performed without bootstrapping the previous sets-to-hit, the number of intermediate steps is higher, i.e. 5 instead of 2.
\end{observation}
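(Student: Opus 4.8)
The plan is to split the observation into its rigorous core and its empirical part. The rigorous claim is that the correction subsets gathered during an earlier explanation step remain valid sets-to-hit in the current \comus call, so the collection \setstohit may be carried over \emph{as is}; the empirical claim is that doing so lowers the number of hitting-set iterations (the reported ``$5$ instead of $2$''). I would prove the first claim and argue the second only heuristically. The single fact that makes everything work is that, in the MIP-incremental encoding of \cref{sec:mip-incrementality}, the \emph{clause set} $\formula = \formulac \wedge \Iend \wedge \overline{\Iend \setminus I}$ is fixed once and for all over the literals of $\Iend$ and $\overline{\Iend \setminus I_0}$: between two consecutive explanation steps only the weights (the cost function $f_I$) and the active predicate $p$ change, never the underlying set of clauses.

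First I would establish reuse-correctness. By the definition of a correction subset, a set $\m{K} \subseteq \formula$ is a correction subset of $\formula$ exactly when $\formula \setminus \m{K}$ is satisfiable, which is a purely Boolean condition on the fixed clause set, independent of $f_I$ and of $p$. Every element of \setstohit is produced by \corrsubsets as $\m{K} = \formula \setminus \grow(\m{S}, \formula)$ with $\grow(\m{S}, \formula)$ satisfiable, so $\formula \setminus \m{K} = \grow(\m{S}, \formula)$ is satisfiable and $\m{K}$ is a correction subset. Because $\formula$ does not change across steps, each such $\m{K}$ is still a correction subset of the \emph{same} $\formula$ in the later call; equivalently, storing the satisfiable subsets \satsets and recomputing $\formula \setminus \m{S}$ for $\m{S} \in \satsets$ returns exactly the previous sets-to-hit. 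Hence the seeded collection is a legitimate set of correction subsets, and \cref{thm:soundcomplete} applies verbatim: its hypotheses only require \setstohit to \emph{be} a set of correction subsets of $\formula$, not to be all of them, so an $f_I$-optimal hitting set of the reused \setstohit that satisfies $p$ and is unsatisfiable is still a genuine OCUS. Soundness and completeness of the seeded run therefore follow from the already-proved \cref{thm:soundcomplete}, the only difference being that the loop now starts from a non-empty \setstohit.

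Finally I would address the efficiency part and flag the obstacle. The termination argument in the text gives the useful monotonicity hook: every non-terminating iteration of \comus strictly enlarges \setstohit by a correction subset disjoint from the current satisfiable $\m{S}$, and the loop halts as soon as the optimal $p$-constrained hitting set of \setstohit is unsatisfiable. Seeding \setstohit with correction subsets already known to be necessary means fewer \emph{fresh} ones must be discovered before the optimal hitting set becomes an OCUS, which is why warm-starting cannot make the situation worse and typically improves it. The hard part is that this does not yield a clean inequality on the iteration count: the number of rounds is not a function of $|\setstohit|$ alone, since the hitting-set solver may traverse a different sequence of satisfiable candidates depending on ties and on the updated weights $f_I$. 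Consequently the concrete figure (``$5$ instead of $2$'') is intrinsically an empirical measurement obtained by executing the algorithm on the running example, and the provable content is limited to reuse-correctness together with the monotone-growth property that guarantees termination.
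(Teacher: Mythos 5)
Your proposal is correct and follows essentially the same route as the paper: the paper's own justification for this observation is exactly your rigorous core --- the clause set $\formula = \formulac \wedge \Iend \wedge \overline{\Iend \setminus I_0}$ is fixed across explanation steps (\cref{sec:mip-incrementality}), so previously computed correction subsets remain correction subsets of the same $\formula$, and \cref{thm:soundcomplete} applies to any seeded collection since it only requires $\m{H}$ to be \emph{some} set of correction subsets --- while the figure of $5$ versus $2$ intermediate steps is, as you say, an empirical measurement read off from the run in \cref{tab:example-explanation-generation:incr:step2}. Your added care in flagging that no clean iteration-count inequality follows (the solver's path depends on ties and the updated weights $f_I$) is a fair sharpening of what the paper leaves implicit, not a deviation from its argument.
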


\begin{table}[!h]
	\centering
	\begin{adjustbox}{max width=\textwidth}
		\begin{tabular}{lcccc} 
			Step & $\m{S}$                             & $\sat(\m{S})$    &          \grow($\m{S}$, $\formula$)                & $\setstohit  \gets \setstohit  \cup \m{K}      $ \\ \toprule[2pt]
			&    &  &     &  $\{ \{c_3, {x_1}, x_2, \lnot x_3 \}, $                                \\
			&    &  &     &  $\{c_3, {x_1} , x_2 , \cancel{x_3}\}, $                                \\
			&    &  &     &  $\{c_4, \cancel{\lnot x_1} ,\cancel{\lnot x_2} , \lnot x_3\}, $                                \\
			&    &  &     &  $\{c_3, {x_1} , \cancel{\lnot x_2} , \cancel{x_3}\}, $                                \\
			&    &  &     &  $\{c_2, x_2, \cancel{x_3},  \cancel{\lnot x_1}\}\} $                                \\\midrule
			1. &  $\{ c_2 ,x_1, \lnot x_3\}$  & $\mathit{true}$ & $\{ c_2, c_3, c_4,  x_1 , x_2, \lnot x_3 \}$    &  $\{..., \{ c_1, \cancel{x_3},  \cancel{\lnot x_2}\}\}$                                \\
			\midrule 
			2. &  $\{ c_1, c_2, x_1, \lnot x_3 \}$  & $\mathit{false}$ & \phantom{$\{c_1 , c_2 , c_4, \lnot x_1 , \lnot x_2 ,  \lnot x_3 \}$}   &                                 \\
		\end{tabular}
	\end{adjustbox}
	\caption{\textit{Example (continued)}. Explanation step 2 ($\mathbf{c_1 \wedge c_2 \wedge x_1 \implies x_3}$) of the explanation sequence generated with \onestepo \emph{incremental}. }
	\label{tab:example-explanation-generation:incr:step2}
\end{table}

Finally, for the last explanation step, we adapt the weights to reflect the current interpretation and that we only want to explain $\lnot x_2$ ($w_{x_3} = 1$ and $w_{\lnot x_3} = \infty$).

\begin{table}[!h]
	\centering
	\begin{adjustbox}{max width=\textwidth}
	\begin{tabular}{lcccc} 
		Step & $\m{S}$                             & $\sat(\m{S})$    & \grow($\m{S}$, $\formula$)                                 & $\setstohit  \gets \setstohit  \cup \m{K}$ \\ \toprule[2pt]
			&    &  &     &  $\{ \{c_3, {x_1}, x_2,\cancel{\lnot x_3} \}, $                                \\
			&    &  &     &  $\{c_3, {x_1} , x_2 , {x_3}\}, $                                \\
			&    &  &     &  $\{c_4, \cancel{\lnot x_1} ,\cancel{\lnot x_2} , \cancel{\lnot x_3}\}, $                                \\
			&    &  &     &  $\{c_3, {x_1} , \cancel{\lnot x_2} , {x_3}\}, $                                \\
			&    &  &     &  $\{c_2, x_2, {x_3},  \cancel{\lnot x_1}\}, $                                \\
			&    &  &     &  $\{ c_1, {x_3},  \cancel{\lnot x_2}\}\}$                                \\\midrule
		1. &  $\{ c_4, x_2, x_3  \}$  & $\mathit{false}$ & \phantom{$\{ c_1,  c_3, c_4, x_1, \lnot x_2,\lnot x_3 \}$}    & \phantom{$\{..., \{ c_2, \lnot x_2, \lnot x_3\}\}$ }\\
	\end{tabular}
	\end{adjustbox}
	\caption{\textit{Example (continued)}. Explanation step 3 ($\mathbf{c_4 \wedge x_3 \implies \lnot x_2}$) of the explanation sequence generated with \onestepo \emph{incremental}. }
	\label{tab:example-explanation-generation:incr:step3}
\end{table}

\begin{observation}\textbf{(Disjoint Correction Subsets)} Observe set-to-hit $\{ c_4, \cancel{\lnot x_1}, \cancel{\lnot x_2}, \cancel{\lnot x_3}\}$ in the collection of previously compute sets-to-hit of \cref{tab:example-explanation-generation:incr:step3}. Given the current interpretation and the literal $\lnot x_2$ to explain, only $c_4$ can \underline{and} has to be taken. 
The phenomenon of a set being disjoint from another with respect to $p$ is what we call in \cref{prop:p-disjoint} $p$-disjointness.
In this case, subset $\{ c_4, \cancel{\lnot x_1}, \cancel{\lnot x_2}, \cancel{\lnot x_3}\}$ is $p$-disjoint from the other sets-to-hit for the hitting set solver. Therefore, it poses a stronger restriction on the sets-to-hit. 
\end{observation}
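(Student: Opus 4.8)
The plan is to establish the $p$-disjointness property that \cref{prop:p-disjoint} formalizes and that the observation illustrates on the set-to-hit $\{c_4,\cancel{\lnot x_1},\cancel{\lnot x_2},\cancel{\lnot x_3}\}$. First I would fix the formalization. Writing $p$ as the ``exactly one of $\negset{\Iend\setminus I}$'' predicate, I would call an element of \formula \emph{usable} if a $p$-satisfying hitting set is permitted to contain it, i.e.\ if it has finite weight in the incremental encoding; concretely the \emph{non}-usable elements are exactly the infinite-weight literals $\negset{I}\cup(\Iend\setminus I)$, the negations of the already-derived literals together with the still-underived targets. I would then say two correction subsets $C_1,C_2$ of \formula are \emph{$p$-disjoint} when their usable parts are disjoint. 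With this definition the worked example is simply the base case: the usable part of $\{c_4,\cancel{\lnot x_1},\cancel{\lnot x_2},\cancel{\lnot x_3}\}$ is the singleton $\{c_4\}$, so any admissible hitting set is forced to take $c_4$, which is the observation's claim that ``$c_4$ can and has to be taken''.

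The core step is then purely combinatorial. Given pairwise $p$-disjoint correction subsets $C_1,\dots,C_k$, I would argue that every $p$-satisfying hitting set $h$ must contain, for each $i$, some element $e_i$ lying in the usable part of $C_i$: it must hit $C_i$ by \cref{def:minimal-hs}, and it cannot do so through a non-usable element by the definition of usable. Pairwise $p$-disjointness makes the $e_i$ distinct, so $h$ contains at least $k$ distinct usable elements and hence $f(h)\ge\sum_{i}\min_{e\in C_i}w_e$. This is exactly the ``stronger restriction'' claimed in the observation: whereas the single literal that $p$ forces out of $\negset{\Iend\setminus I}$ may hit several ordinary sets-to-hit at once, a $p$-disjoint set such as $\{c_4,\dots\}$ shares none of its usable elements with that forced selection, so it has to be paid for separately by a dedicated clause.

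To make this usable inside \comus I would finally show that the strengthening is \emph{sound}, i.e.\ that adding the derived ``at least one usable element of $C_i$'' constraints (or the aggregated lower bound) to the \cohs problem never discards the OCUS. This is immediate from \cref{thm:soundcomplete} and \cref{prop:MCS-MUS-hittingset}: the OCUS is itself a $p$-satisfying hitting set of all MCSs, hence it already hits each $C_i$ and therefore already satisfies every constraint the $p$-disjoint family imposes, so no optimal solution is cut away. The main obstacle I anticipate is getting the usable-part definition exactly right so that the distinctness argument is airtight in the presence of the exactly-one coupling on $\negset{\Iend\setminus I}$: I must rule out the degenerate case in which the one permitted selectable literal lies in the usable parts of two different $C_i$ simultaneously (which would break distinctness), and this is precisely what $p$-disjointness is designed to forbid, since at most one member of a $p$-disjoint family can contain any given selectable literal. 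Checking that, in the exactly-one instance, this notion coincides with ordinary set-disjointness of the clause-parts is then the routine remainder.
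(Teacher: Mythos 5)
Your argument is sound and reaches the right conclusion, but it takes a more formal and more general route than the paper, whose entire ``proof'' of this observation is instance-level inspection: the paper simply notes that in \cref{tab:example-explanation-generation:incr:step3} the struck-out literals $\lnot x_1,\lnot x_2,\lnot x_3$ cannot be selected under the updated constraint (implemented via infinite weights in the incremental MIP), so the set can only be hit through $c_4$, and then invokes \cref{prop:p-disjoint} by name without proving anything further. You instead rebuild the notion from the encoding side: your ``usable'' elements (finite-weight, i.e.\ $\formulac\cup I\cup\negset{\Iend\setminus I}$) are exactly the paper's selectable elements, your singleton usable part $\{c_4\}$ reproduces the paper's forced-choice claim, and you then go beyond the paper with a family-level lower bound $f(h)\ge\sum_i\min_{e\in C_i}w_e$ and a soundness check via \cref{thm:soundcomplete}. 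Two remarks on the extra machinery. First, your definition (``usable parts disjoint'') is not literally \cref{prop:p-disjoint}, which quantifies over all $p$-satisfying hitting sets and demands distinct representatives $s_1\neq s_2$; your condition implies the paper's only once one fixes that the effective $p$ in the incremental setting excludes the infinite-weight literals (the paper states this in its discussion of incrementality with MIP), so the ``coincidence check'' you defer to the end is genuinely load-bearing, not routine — with the raw exactly-one predicate alone, a hitting set containing $\lnot x_1$ would satisfy $p$ and your distinctness argument would not go through. Second, your soundness step is correct but moot for this observation: the $p$-disjoint sets here are genuine correction subsets already in $\setstohit$, so the hitting-set solver is required to hit them anyway and nothing new is being added; the observation's point is only that such a set constrains the solver more tightly than sets overlapping the forced exactly-one selection, which your lower-bound argument does capture and indeed makes quantitative where the paper stays qualitative.
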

\end{example}

\begin{definition}\label{prop:p-disjoint}
	Two sets $S_1$ and $S_2$ are $p$-disjoint if every set that hits both $S_1$ and $S_2$ and satisfies $p$ contains $s_1\in S_1$ and $s_2\in S_2$ with $s_1 \neq s_2$.
\end{definition}

\cref{ex:ocus-incr:expl-seq} shows that \emph{incrementality with MIP} and \emph{growing the satisfiable subset} $\m{S}$ are effective at reducing the size and the number of sets-to-hit when computing a sequence of explanations using \onestepo. 
Next, in section \ref{sec:correction_subset_enum}, we take advantage of \cref{prop:p-disjoint} to enumerate multiple correction subsets that are $p$-disjoint of each other during the \corrsubsets procedure.

\subsubsection{Correction Subsets Enumeration}\label{sec:correction_subset_enum}

Our \comus algorithm repeatedly alternates between computing hitting sets and correction subsets. 
The increasingly large collection of sets-to-hit makes finding optimal hitting sets much more expensive compared to the \corrsubsets procedure, which solely relies on finding correction subsets from a given satisfiable subset.
Additionally, in the last explanation step of \cref{ex:ocus-incr:expl-seq}, we saw that one of the sets-to-hit ($\{c_4, \cancel{\lnot x_1} ,\cancel{\lnot x_2} , \cancel{\lnot x_3}\}$) was $p$-disjoint to the others, imposing a strong restriction on the hitting set solver. 
The question is: 
\begin{quote}
Can we cheaply find \textit{multiple}, ideally \emph{$p$-disjoint}, correction subsets and thereby add multiple sets-to-hit in one go?
\end{quote}

Inspired by \citet{marques2013computing}, we depict a \corrsubsets procedure
in \cref{alg:corr-subsets-grow-with-sat} that computes a collection of correction subsets $\m{K}$ starting from a given subset of constraints $\m{S}$ (either the empty set or a computed hitting set). 
The \corrsubsets procedure will repeatedly compute a satisfiable subset $\m{S}' \supseteq \m{S}$, and add its complement 
to the collection of disjoint MCSes $\m{K}$ and to $\m{S}$, ensuring that the constraints in the correction subset cannot be present in the next correction subset (disjoint), until no more satisfiable subsets can be found. 

\begin{algorithm}[!h]
	\DontPrintSemicolon
	$\m{K} \gets \emptyset$\;
	$\subsetT' \gets  \subsetT$ \label{alg:disjmcs:init} \;
	
	\While{$\sat(\m{S}')$}{
		C $\gets  \formula \setminus \grow(\m{S}', \formula)$ \label{alg:corr-subsets-grow-with-sat:corr-subset}\;
		$ \m{S}' \gets \m{S}' \ \cup $ C \label{alg:corr-subsets-grow-with-sat:mapping}\;
		$ \m{K} \gets \m{K} \ \cup \{$C$\}$\;
	}
	\Return $\m{K}$
	\caption{$\corrsubsets(\subsetT, \formula)$ }
	\label{alg:corr-subsets-grow-with-sat}
\end{algorithm}

For simple constraints $p$, \cref{alg:corr-subsets-grow-with-sat} is directly applicable and will be able to compute \emph{disjoint} correction subsets.
However, for more complicated $p$ constraints, this easily degrades into computing only a single correction subset.

\subsubsection{Correction Subsets Enumeration with Incremental MIP} If \cref{alg:corr-subsets-grow-with-sat} is directly applied to the MIP incremental \comus variant that considers the whole formula $\formulac\land \textbf{\Iend} \land \overline{(\Iend\setminus I)}$, both the literal-to-explain and its negation will be present at line~\ref{alg:corr-subsets-grow-with-sat:mapping} leading to UNSAT. In a given explanation step, only the base constraints $\formulac$, the current interpretation $I$, and the literals of $\overline{(\Iend\setminus I)}$ can be hit by the hitting set solver.  
Therefore, we project subset $\m{S'}$ onto the base constraints, the current interpretation, and the negated literals to explain. This extra step is executed right after line~\ref{alg:corr-subsets-grow-with-sat:mapping} of \cref{alg:corr-subsets-grow-with-sat}:
\[ \m{S}' \gets \m{S}'  \cap (\formulac \cup I \cup \overline{\Iend \setminus I}) \]
By incorporating \emph{explanation-specific} information, we are able to enumerate extra correction subsets that are $p$-disjoint.

\begin{example}[\cref{ex:ocus-incr:expl-seq} continued] \label{ex:correction-subset-enum}
	Consider the same setting as in \cref{ex:ocus-incr:expl-seq} with given initial interpretation $\m{I} =\emptyset$ and $\Iend = \{x_1, \lnot x_2, x_3\}$.
	\cref{tab:example-corr-subsets:ocus-incr} illustrates the efficiency of the incremental variant of the \corrsubsets algorithm starting from hitting set $\m{S} := \{\lnot x_3\}$ as in the first step of our running example (\cref{tab:example-explanation-generation:incr:step1} of \cref{ex:ocus-incr:expl-seq}). This example uses the \sat-based \grow of \cref{para:domainspecificgrow}. 
	
	\begin{table}[!h]
		\centering
		\begin{tabular}{lcccc} 
			Step & $\m{S}'$                             & $\sat(\m{S}')$    & $\m{S}' \gets \grow(\m{S}, \formula)$      & $\m{K}  \gets \m{K}  \cup \{  \formula \setminus \m{S}'\}$ \\ 
			\toprule[2pt]
			&    \phantom{$\{ c_1, c_2, x_1, \lnot x_3 \}$}    &    &      & $\emptyset$                                \\\midrule
			1. & $\{ \lnot x_3\}$                    & $\mathit{true}$  & $\{c_1, c_2, c_4, \lnot x_1, \lnot x_2, \lnot x_3 \}$            & $\{ \{c_3, \cancel{x_1}, x_2, \cancel{x_3}\}\}$                                \\\midrule
			2. & $\{c_3, x_2, \lnot x_3\}$ & $\mathit{true}$  & $\{c_2 , c_3, c_4, x_1 , x_2, \lnot x_3 \}$ & $\{ ..., \{c_1, \lnot x_1, \cancel{\lnot x_2}, \cancel{x_3}\}\}$                               \\\midrule
			3 & $\{c_1, c_3, \lnot x_1, x_2, \lnot x_3\}$& $\mathit{false}$  &  &                       \\  
		\end{tabular}
		\caption{\textit{\cref{ex:ocus-incr:expl-seq} (continued)}. Correction Subset enumeration starting from hitting set $\{ \lnot x_3\}$ with \emph{MIP-incremental} \comus.}
		\label{tab:example-corr-subsets:ocus-incr}
	\end{table}

	The MIP-based incremental variant of \comus uses unsatisfiable formula \formula defined as $\formulac\land \textbf{\Iend} \land \overline{(\Iend\setminus I)}$. For given interpretation $I = \emptyset$, the $\formula$ corresponds $\formula := \formulac\land \{ x_1\}\land  \{\lnot x_2\}\land \{ x_3\}\land \{ \lnot x_1\}\land  \{ x_2\}\land \{ \lnot x_3\}$.

	Taking a closer look at step 1 of \cref{tab:example-corr-subsets:ocus-incr}, $\lnot x_3$ is present in $\m{S}$ and $x_3$ is in the corresponding correction subset. If we were to add it directly to $\m{S}$, the algorithm would stop at the next iteration since $\m{S}$ would be unsatisfiable. However, $x_3$ (and $x_1$) cannot be selected by the hitting set solver, and therefore should not be added. 
	By adding only the clauses from $\{\formula \setminus \m{S}\}$ projected onto $ (\formulac\cup I \cup \overline{\Iend \setminus I})$, i.e.~$c_3$ and $x_2$, we are able to enumerate \textbf{multiple} correction subsets that are $p$-disjoint for the hitting set solver. 
\end{example}

\cref{ex:correction-subset-enum} depicts the effectiveness of enumerating multiple correction subsets that are $p$-disjoint by projecting the correction subset onto the unsatisfiable formula for the current interpretation $(\formulac\cup I \cup \overline{\Iend \setminus I})$. 
In the rest of the paper, we consider correction subset enumeration with a \sat-based grow as the baseline approach. We refer to it as \emph{Multi-\sat}.


\section{Experiments}\label{sec:experiments}

In this section, we validate the \emph{qualitative improvement} of computing explanations that are optimal with respect to a cost function as well as the \emph{performance improvement} of the different versions of our algorithms.

\subsection*{Experimental Setup}

Our experiments\footnote{The code for all experiments is made available at \texttt{https://github.com/ML-KULeuven/ocus-explain}.} were run on a compute cluster where each explanation sequence was assigned a single core on a 10-core INTEL Xeon Gold 61482 (Skylake) processor with a time limit of 60 minutes and a memory-limit of 8GB. The code is written on top of PySAT 0.1.7.dev1 \cite{pysat}.
For the MIP calls, we used Gurobi 9.1.2, for SAT calls MiniSat 2.2 and for \maxsat~calls RC2 as bundled with PySAT.
In the MUS-based approach, we used PySAT’s deletion-based MUS extractor MUSX~\cite{marques2010minimal}. 

Regarding the benchmark dataset, we rely on a set of generated Sudoku puzzles of increasing difficulty (different amount of given numbers), the Logic Grid puzzles of \citet{bogaerts2021framework}, as well as the instances from \citet{schotten}.\footnote{We express our gratitude towards \emph{Matthew.~J.~McIlree} and \emph{Christopher Jefferson} of St.~Andrews University for their help with the extraction of CNF instances from Essence problem specifications \cite{frisch2008essence} using Savile Row \cite{nightingale2017automatically}, as well as for supplying the problem instances.}

When generating an explanation sequence for these puzzles, the unsatisfiable subsets identify which constraints and which previously derived facts should be combined to derive new information. 
Similar to \citet{ijcai2021}, for Logic Grid puzzles, we assign a cost of 60 for puzzle-agnostic constraints; 100 for puzzle-specific constraints; and a cost of 1 for facts. 
For all other puzzles, we assign a cost of 60 when using a constraint and a cost of 1 for facts.
In  \cref{tab:puzzle-composition-family} of \cref{apx:puzzledata}, we summarize the average number of clauses (avg. $\#$ {clauses}), the average number of literals to explain (avg. $\#$ lits-to-explain) as well as the number puzzles (n) in the benchmark data set for each puzzle family. 
\paragraph{Research Questions} Our experiments are designed to answer the following research questions:  

\begin{enumerate}[label={Q\arabic*}, ref={Q\arabic*}]
\item  What is the effect of using an \textit{optimal} unsatisfiable subset, on the quality of the generated step-wise explanations? \label{rq:q1}
\item What is the impact of information \textbf{re-use} on the efficiency of \comus? \label{rq:information-reuse}
\item What are the time-critical components of \comus?\label{rq:time-critical}
\item  How does more advanced extraction of correction subsets and extraction of multiple correction subsets affect performance?\label{rq:q3}
\item   What is the efficiency of a single step O(C)US

\begin{enumerate}
	\item from an instantaneous (time-to-first) explanation  point of view?
    \item from a step-wise (single next explanation) solving point of view?
\end{enumerate} \label{rq:q4}
\end{enumerate}

\subsection{Explanation Quality}
To evaluate the effect of optimality on the quality of the generated explanations, we reimplemented a MUS-based explanation generator based on \cref{alg:oneStep}. 
Before presenting the results, we want to stress that this is \emph{not} a fair comparison with the implementations of \citet{ecai/BogaertsGCG20} and \citet{schotten}, where a heuristic is used that relies on \emph{even more} calls to \call{MUS} in order to avoid the quality problems we will illustrate below.
While in both cases this would yield better explanations, it comes at the expense of computation time, thereby leading to several hours to generate the explanation of a single puzzle. 

To answer \textbf{Q1}, we ran the OCUS-based algorithm as described in \cref{alg:oneStepOCUS} and compared 
at every step the cost of the produced explanation with the cost of MUS-based explanation of \cref{alg:oneStep}.
These costs are plotted on a heatmap in Figure \ref{fig:rq1_heatmap}, where the darkness represents the number of occurrences of the combination at hand. 

We see that the difference in quality is striking in many cases, with the MUS-based solution often missing very cheap explanations (as seen by the darkest squares in the column above cost 60), thereby confirming the need for a cost-based OUS/OCUS approach.

\begin{figure}[!h]
		\centering
		\includegraphics[width=0.54\textwidth]{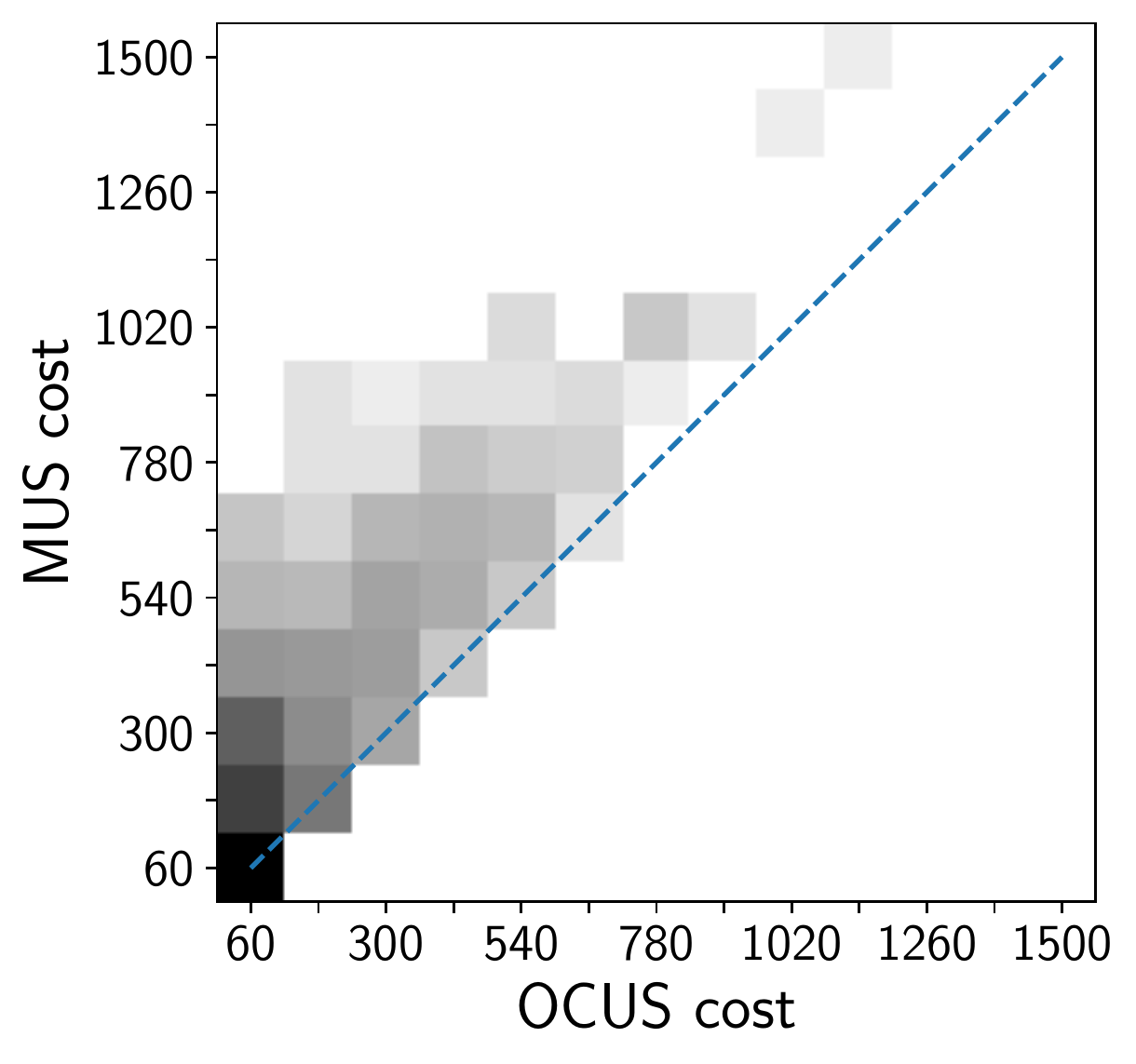}
		\caption{Q1 - Explanation quality comparison of optimal versus subset-minimal explanations in the generated puzzle explanation sequences.}
		\label{fig:rq1_heatmap}
	\end{figure}

\subsection{Information Re-use}\label{sec:experiments:incrementality}

To answer \textbf{\ref{rq:information-reuse}}, we compare the effect of incrementality when generating a sequence of explanations. 
Next to \begin{inparaenum}[(1)]
	\item \comus, we also include
	\item the \emph{bounded} \comus (\ousb) algorithm, where we call the \comus algorithm for every literal in every step, but we reuse information by giving it the current best bound $f(S_{best})$ and iterating over the literals that performed best in the previous call first; and
	\item the \emph{split} \comus (\ousbiter) approach, where we split up the computations by iteratively selecting the most promising literal and expanding only one hitting set for it, then re-evaluating the most promising literal and so forth.
\end{inparaenum}

\subsubsection{Incremental versus Non-incremental Explanation Generation}

Preliminary experiments have shown that incrementality by keeping track of the hitting sets using the MIP solver significantly outperforms bootstrapping satisfiable subsets. 
Therefore, in the experiments, we only show results on incrementality using MIP solvers.

For \comus, incrementality (\emph{+Incr}) is achieved by reusing the same MIP hitting set solver throughout the explanation calls, as explained in \cref{sec:ocusEx}.
Methods \ousb and \ousbiter use a separate hitting set solver for every literal to explain. Each hitting set solver can similarly be made incremental (with respect to its own literal) across the explanation calls (\emph{+Incr.}), or not.

Figure \ref{fig:rq2_cactus_incrementality_log} depicts the number of instances fully explained through time before the given 1-hour timeout. In each configuration depicted, we use a single \sat-based \grow in the \corrsubsets procedure. The same color is used for the same \comus algorithms, with a full line for the incremental version and a dashed line for the non-incremental one.

\begin{figure}[!h]
	\centering
	\includegraphics[width=0.9\textwidth]{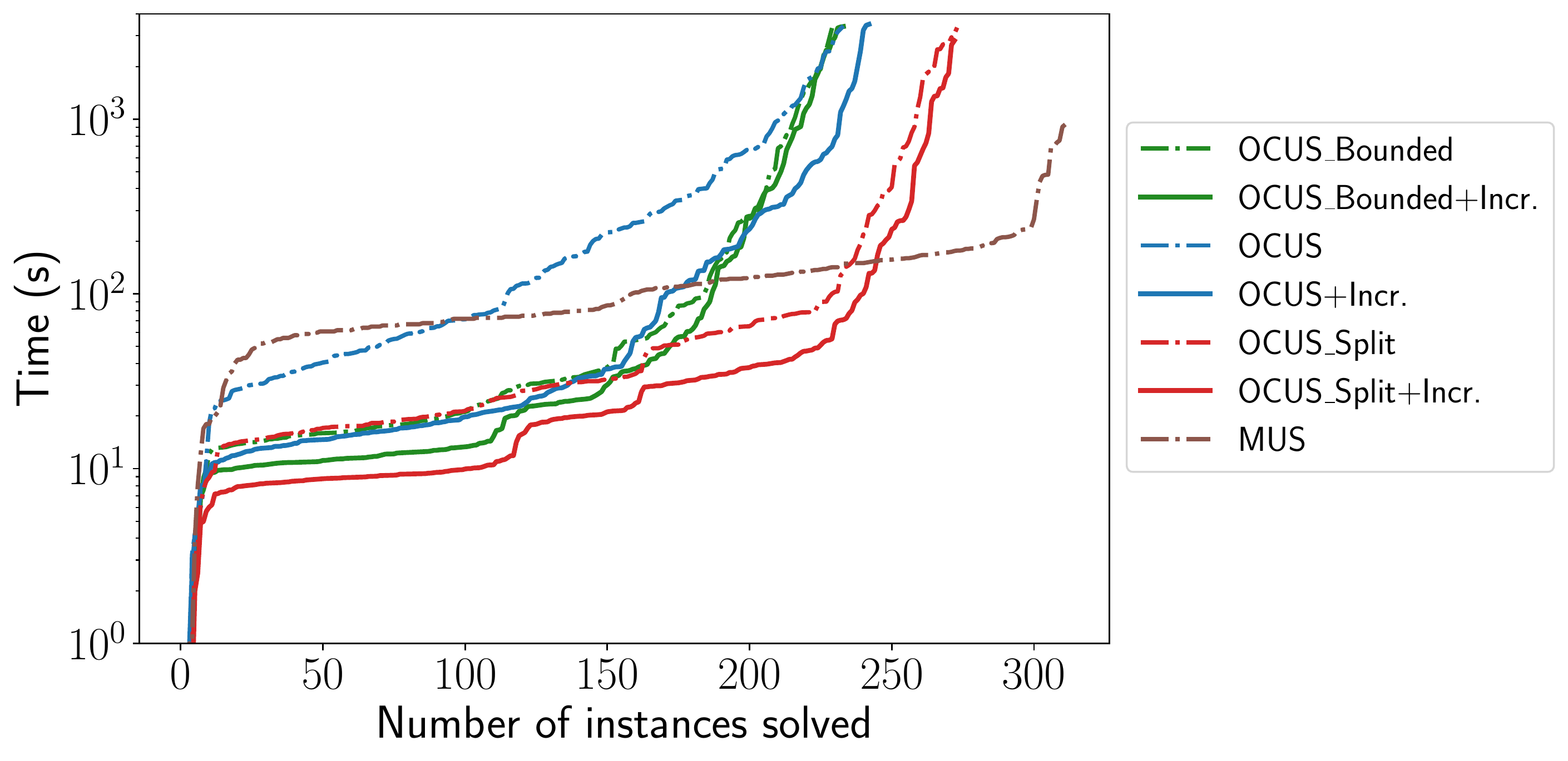}
	\caption{Time to generate a full explanation sequence with a single \sat grow call in the \corrsubsets procedure for incremental and non-incremental \comus algorithms. Incrementality improves the number of instances solved and the time required to solve them for all O(C)US algorithms.}
	\label{fig:rq2_cactus_incrementality_log}
\end{figure}

If we compare all configurations, we see that \comus is faster than the plain \mus-based implementation for simpler instances. \mus is able to explain more instances than all \comus versions, as it solves a simpler problem (with worse quality results as shown in \textbf{Q1}).
\comusbound is faster than \comus for easier instances but explains about the same number of instances, and \comussplit is considerably faster and solves the most instances out of all \comus algorithms.

The effect of introducing incrementality produces a speed-up in \comus. For \comusbound it is negligible and for \comussplit there is a speed-up in all but its most difficult instances.
In general, we see that the curves of the incremental variants are located somewhat lower.
The best runtimes are obtained with \ousbiterincr, that is, using an incremental hitting set solver \emph{for every individual literal-to-explain separately} and only expanding the literal that is most likely to provide a cost-minimal OCUS.

\subsubsection{Instance-level Speed-up with Incrementality}\label{sec:Instance-level speed-up with incrementality} Next, we analyze the speed-up of introducing incrementality per instance. \cref{fig:rq2_runtime_incr_non_incr} compares the time to generate a full sequence for the incremental variant with the non-incremental version for each puzzle of every \comus configuration. Results in the upper triangle signify an improvement using incrementality, and the lower triangle indicates worse performance with incrementality.

	\begin{figure}
	\centering
	\includegraphics[width=0.54\textwidth]{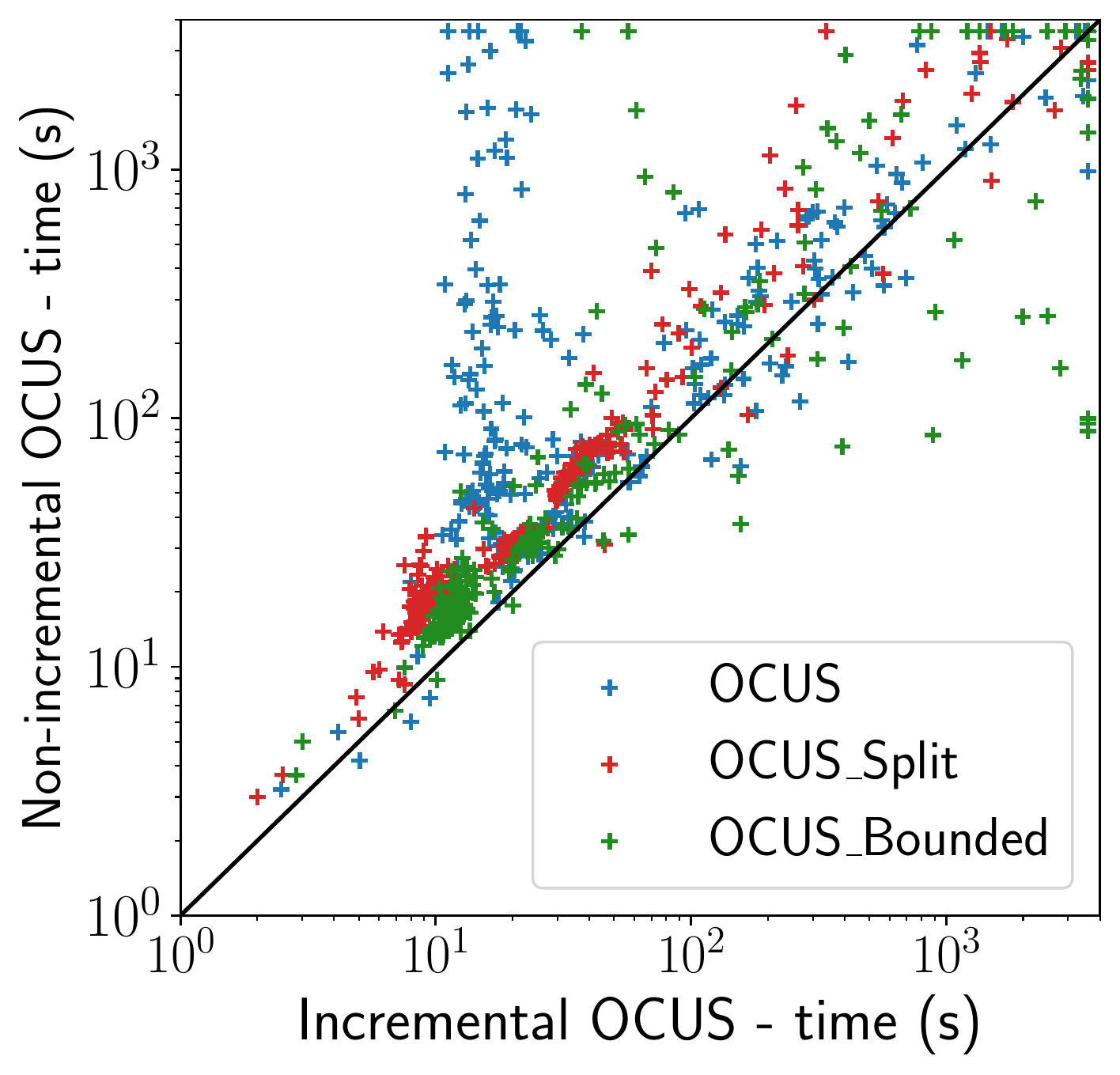}
	\caption{Q2 - Runtime comparison of introducing incrementality on the time to generate a whole explanation sequence with \sat-based \grow.}
	\label{fig:rq2_runtime_incr_non_incr}
\end{figure}

Taking a closer look at the results for \comus, the story is similar to the results of \cref{fig:rq2_cactus_incrementality_log}. Many of the instances are explained quickly (around 10 to 20 seconds) by \comusincr, whereas non-incremental \comus takes much longer to explain some instances, even leading to a timeout.

Incrementality with \comussplit and \comusbound produce marginal improvements on the runtime. Most of the instances lie close to the black line.
Only a few instances are present in the lower triangle, for the incremental variant of \comusbound. For these instances, \ousbincr takes on average double the time to generate the first explanation step due to the overhead associated with introducing incrementality. The story is similar for the few instances where \ousbiterincr is slower than \comussplit.

\subsection{Runtime Decomposition} To evaluate the time-critical components of \comus (Q3), we decompose the time spent in each part of the \call{OCUS} algorithms in \cref{tab:runtime-decomposition}. We only included the runtimes of instances that do not time out for all methods to allow a fair comparison. 

	The table depicts for each configuration 
\begin{inparaenum}[(1)]
	\item the number of instances \emph{explained};
	\item \%\texttt{OPT} the percentage of time spent in the hitting set solver;
	\item \%\sat the percentage of time spent in the sat solver:
	\item \%\texttt{CorrSS} the percentage of time spent in the \corrsubsets procedure; and
	\item $N_{sth}$ the average total number of sets-to-hit computed.
\end{inparaenum}
Each version of the \comus algorithm uses the \sat-based grow in the \corrsubsets procedure (see \cref{tab:correction-subset-proc-description}).

Looking closer at the runtime decomposition, we observe that most of the time in all the \comus algorithms is spent computing the optimal hitting sets.
The results in the $N_{sth}$ column (top 3 rows versus incremental bottom 3 rows), highlight the decrease in the amount of sets-to-hit that need to be computed when adding incrementality, in our case, re-using the MIP solver between explanation steps. 
Indeed, similar to the results of \cref{ex:ocus-incr:expl-seq}, the \comus algorithms do not need to recompute the previously derived sets-to-hit, and will therefore start with a good candidate hitting set at the beginning of an explanation step.

\begin{table}[!h]
	\centering
	\begin{adjustbox}{max width=\textwidth}
		\begin{tabular}{r|c|c|c|c|c}
			\textbf{config}             &  explained  &  \%\texttt{OPT} &   \%\sat & \%\corrss &  $N_{sth}$ \\ \midrule
			\comus                      & [233 / 403] &          96.65\% &  3.01\% & 0.35\% &      6245     \\
			\comusbound                 & [229 / 403] &          86.86\% & 12.31\% & 0.83\% &      9310     \\
			\ousbiter                       & [273 / 403] &          88.58\% & 10.41\% & 1.01\% &      7480     \\\midrule
			\comusincr                  & [242 / 403] &          99.16\% &  0.78\% & 0.06\% &       405     \\
			\ousbincr  			        & [233 / 403] &          95.95\% &  3.94\% &  0.1\% &      1538     \\
			\ousbiterincr               & [272 / 403] &          96.28\% &  3.42\% &  0.3\% &      1713     \\\midrule
			MUS                          & [311 / 403] &              --- &     --- &    --- &       ---    \\
		\end{tabular}
	\end{adjustbox}
	\caption{Runtime decomposition of core parts of \call{O(C)US} with the \sat-based grow. On the left, most of the computational time is spent in computing more optimal (constrained) hitting sets, while on the right, the runtime is shifted towards higher quality  correction subsets. Incrementality, by re-using the same MIP solver throughout the explanation steps, helps to drastically reduce the average number of sets-to-hit $N_{sth}$.}
	\label{tab:runtime-decomposition}
\end{table}

\subsection{Correction Subset Extraction}

	As \cref{tab:runtime-decomposition} table shows, most time is spent computing the optimal hitting sets. Hence, there is potential in reducing its effort by computing better or more correction subsets, thereby having better collections of sets-to-hit.
This induces a trade-off between the \emph{efficiency} of the \grow strategy, the \emph{quality} of the produced \emph{satisfiable subset} as well as the corresponding \emph{sets-to-hit} generated by the \corrsubsets procedure. 
In this section, we evaluate whether an efficient \grow-call is able to balance the efficiency and the quality of the produced satisfiable subset. Second, we analyze whether the enumeration of multiple, ideally $p$-disjoint, correction subsets 
reduces the time spent in the hitting set solver.

Thus, to answer \textbf{\ref{rq:q3}}, we compare the incremental variants of \comus, \emph{bounded} \comus, \emph{split} \comus, and only change the \corrsubsets strategy they use.

\paragraph{Efficient Correction Subset Enumeration} 
Extending the observation of \cref{para:domainspecificgrow} that a call to an efficient \maxsat~solver helps in finding maximally satisfiable subsets, we propose three additional variants of \cref{alg:corr-subsets-grow-with-sat}:
\begin{enumerate}
	\item The first variant, \emph{Multi-\sat}, repeatedly grows with \sat and blocks the corresponding correction subset until no more correction subsets can be extracted.
	\item The second variant is \emph{Multi-\maxsat}. This correction subset enumerator repeatedly grows using the domain-specific \maxsat~introduced in \cref{para:domainspecificgrow} until the updated subset $\m{S}'$ is no longer satisfiable. 
	\item The last variant \emph{Multi-\subsetmaxsat} repeatedly grows using the \emph{\subsetmaxsat} to balance the trade-off between efficiency and quality.
\end{enumerate}

\begin{table}[!h]
	\centering
	\begin{adjustbox}{max width=\textwidth}
		
		\begin{tabular}{|l|l|}
			\hline
			\textbf{\corrsubsets}  &  \textbf{Description}\\\hline
			
			\sat & \textit{Extract a satisfiable model computed by the \sat solver.} \\\hline
			\multirow{2}{*}{\subsetmaxsat} & \textit{Extends the satisfiable subset computed by \sat by adding every} \\
			&			\textit{remaining clause $c \in \formula \setminus \m{S}$ if $\m{S} \cup \{c\}$ is satisfiable.} \\\hline
			\multirow{3}{*}{\domspecmaxsat} & \textit{Grow using an unweighted \maxsat~solver with domain-specific }\\
			&  \textit{information, i.e. only previously derived facts and the original}\\
			&  \textit{constraints (see Section~\ref{sec:ocusEx}).}\\\hline
			\multirow{2}{*}{\maxsatfull} & \textit{Grow with an unweighted \maxsat~solver using the full } \\
			& \textit{unsatisfiable formula $\mathcal{F}$.} \\\hline
			\multirow{2}{*}{\emph{Multi}~\sat}& \textit{Repeatedly grow with \textbf{\sat}  and block the corresponding correction}\\ 
			& \textit{subset until no more correction subsets can be extracted.}\\\hline
			\multirow{2}{*}{\emph{Multi}~\domspecmaxsat}& \textit{Repeatedly with \textbf{\emph{\domspecmaxsat}} and block the corresponding}\\ 
			& \textit{correction subset until no more correction subsets can be extracted.}\\\hline
			\multirow{2}{*}{\emph{Multi}~\subsetmaxsat} &	\textit{Repeatedly grow with \textbf{\emph{\subsetmaxsat}}  and block the corresponding}\\ 
			& \textit{correction subset until no more correction subsets can be extracted.}\\\hline
		\end{tabular}
	\end{adjustbox}
	\caption{Description of the correction subset procedures.}
	\label{tab:correction-subset-proc-description}
\end{table}

\begin{figure}[!ht]
	\begin{subfigure}{0.5\textwidth}
		\centering
		\includegraphics[width=\textwidth]{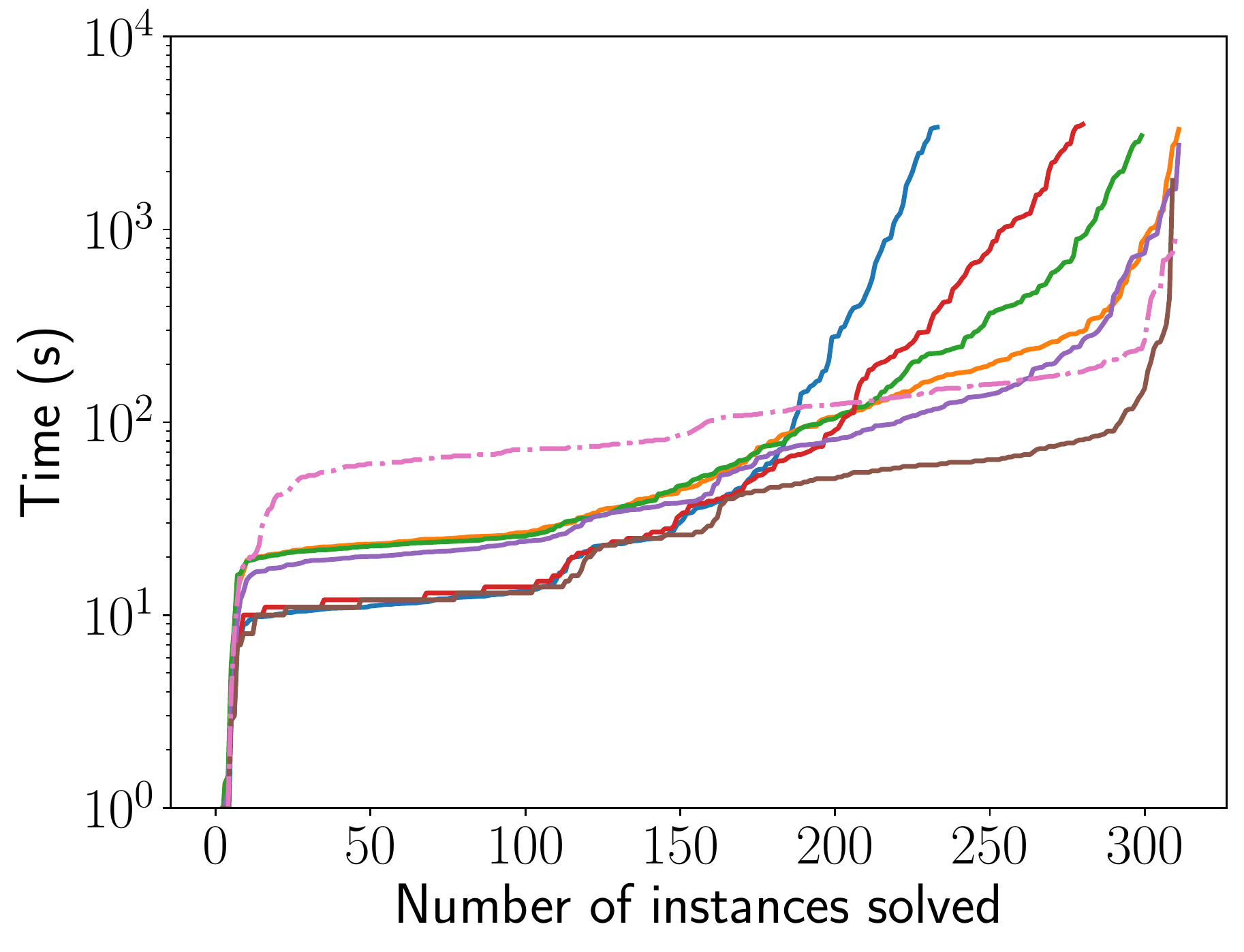}
		\caption{\ousbincr}
		\label{fig:rq3_grow:ousb_iter}
	\end{subfigure}
	\begin{subfigure}{0.5\textwidth}
		\centering
		\includegraphics[width=\textwidth]{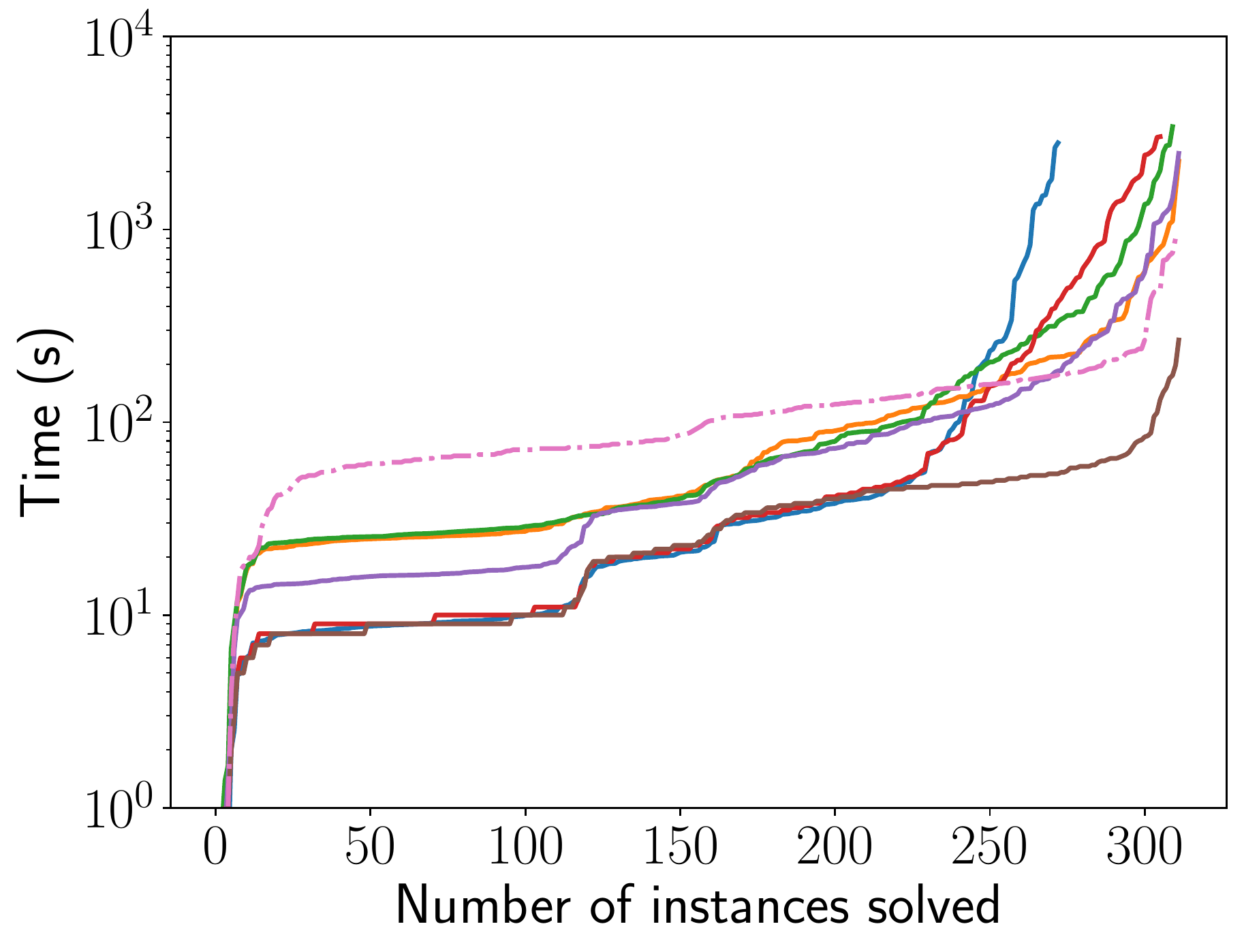}
		\caption{\ousbiterincr}
		\label{fig:rq3_grow:ous_iter}
	\end{subfigure}
	\\
	\begin{subfigure}{\textwidth}
		
		\includegraphics[width=0.93\textwidth]{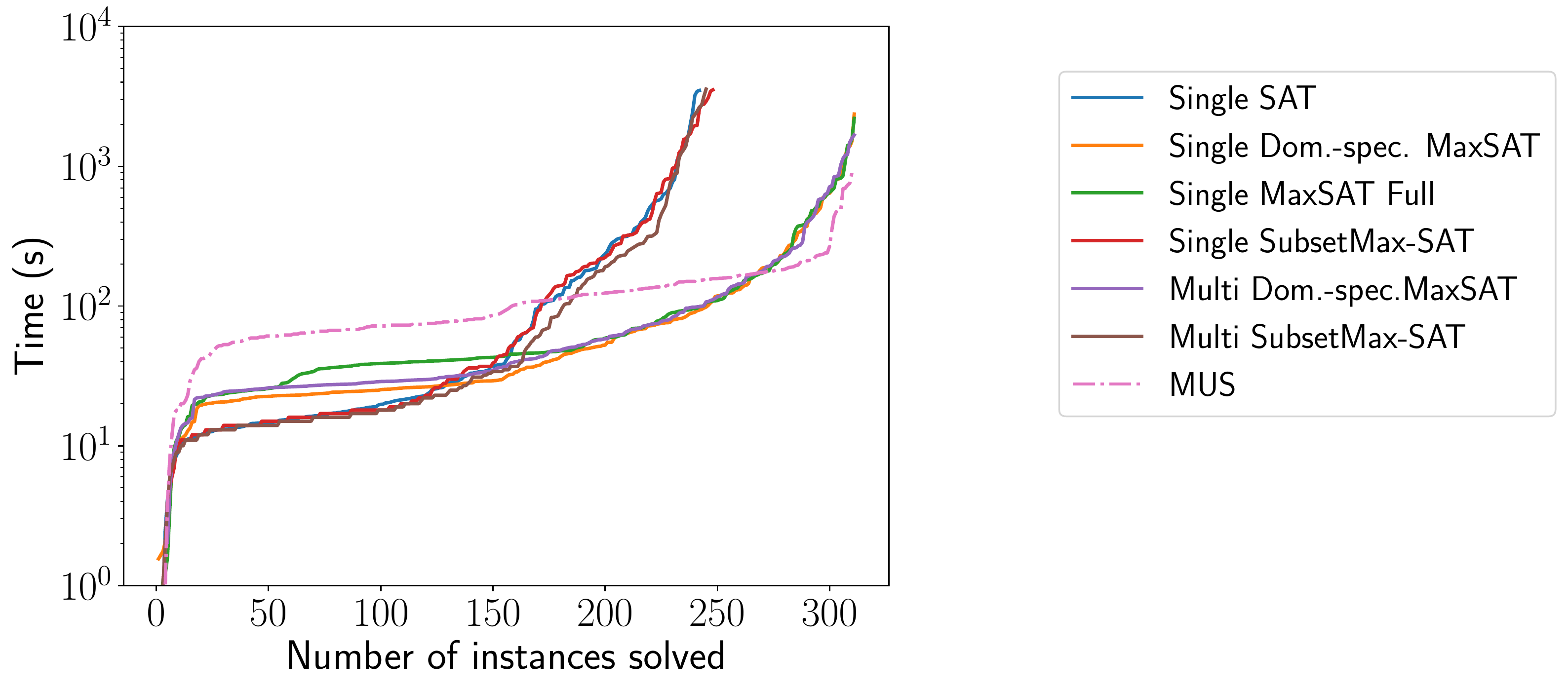}
		\caption{\comusincr\hphantom{-----------------------------------------------------------}}
		\label{fig:rq3_grow:ocus_incr}
	\end{subfigure}
	\caption{Average cumulative explanation time of correction subset configurations for the incremental O(C)US variants. The legends for each graph are ordered from the least instances explained (top) to the most instances explained (bottom).}
	\label{fig:q3:grow_configurations}
\end{figure}

Figure \ref{fig:q3:grow_configurations} depicts for each incremental \comus algorithm, the number of instances solved across time for varying correction subset approaches. 
For \comusincr, all \maxsat-based methods for enumerating correction subsets provide similar results and are able to explain more instances than the rest. 
The story for \ousbiterincr and \ousbincr is different. The best method for enumerating correction subsets with \ousbiterincr and \ousbincr is \multisubsetmaxsat. \multisubsetmaxsat is able to consistently beat all other correction subset enumeration methods and is substantially faster than the naive MUS approach.
To summarize, the best explanation sequence configuration corresponds to the iterated MIP-incremental approach of \ousbiterincr combined with \emph{Multi}-\subsetmaxsat.

\paragraph{Runtime Decomposition} We now look at the runtime decomposition in \cref{tab:runtime-decomposition-corr-susbet-enum} for the best performing \corrsubsets configuration, i.e.~\multisubsetmaxsat, also for the non-incremental OCUS variants. 
For the non-incremental variants, we observe that the shift in computation from the hitting set solver to correction subset enumeration clearly reaps its benefits.

\begin{table}[!h]
	\centering
	\begin{tabular}{r|c|c|c|c|c}
		&  \multicolumn{5}{c}{\textbf{\emph{Multi}-\subsetmaxsat}}\\[1ex]
		\textbf{config}          & \textbf{explained}  &  \textbf{\%\texttt{OPT}} &   \textbf{\%\sat} & \textbf{\%\corrss} &  $\mathbf{N_{sth}}$   \\ \midrule 
		\comus                  & [291 / 403] & 68.15\% & 2.78\%  &    29.07\% &      3649 \\
		\comusbound             & [311 / 403] & 39.87\% & 6.53\%  &     53.6\% &     29747 \\
		\ousbiter               & [311 / 403] & 45.24\% & 3.62\%  &    51.14\% &     28320 \\ \hline
		\comusincr              & [245 / 403] & 87.38\% & 1.03\%  &    11.59\% &       402 \\
		\ousbincr  		        & [309 / 403] & 81.57\% & 5.68\%  &    12.76\% &      1251 \\
		\ousbiterincr           & [311 / 403] & 79.64\% & 1.77\%  &    18.58\% &      1274 \\ \hline
		MUS                     & [311 / 403] &    --- &       ---&          ---&      --- \\
	\end{tabular}
	\caption{Runtime decomposition of core parts of \call{OCUS}. On the left, most of the computation time is spent in computing more optimal (constrained) hitting sets, while on the right, the runtime is shifted toward higher-quality correction subsets.}
	\label{tab:runtime-decomposition-corr-susbet-enum}
\end{table}

Compared to \cref{tab:runtime-decomposition}, more instances are explained, and the computation is more balanced both for incremental and non-incremental \comus algorithms.
The non-incremental \comus variants require considerably more correction subsets to be extracted and somewhat fewer in the incremental case. 
However, the number of instances explained is substantially higher for all \comus configurations.

\subsection{Efficiency of Single Step O(C)US, Instantaneous and Step-wise}

In this section, we analyze whether the algorithms we developed could be fit for an interactive context. By an interactive context, we consider two types of settings. First, a user may want an \emph{immediate explanation step} for the given CSP, and second, a user is asking for the next step, and the next, and so forth. 

\cref{tab:indiv-explanations-all} depicts \emph{Multi}-\subsetmaxsat~the best performing \corrsubsets procedure of \cref{fig:q3:grow_configurations}. For each \comus algorithm, we compute the average time to produce the first explanation step $\overline{t_{1}}$ (instantaneous), the average time to generate one step-wise explanation $\overline{t_{step}}$, and the explanation time quantiles 25 up to 100, where quantile 100 $q_{100}$ symbols the most expensive step to generate over all the problem instances.

\begin{table}[!h]
	\centering
	\begin{adjustbox}{max width=\textwidth}
		\begin{tabular}{r|c|c|cccccc}
			& \multicolumn{8}{c}{\textbf{\emph{Multi}-\subsetmaxsat}}\\[1ex]
			\textbf{config}          & $\mathbf{\overline{t_{1}}}$  & $\mathbf{\overline{t_{step}}}$ &  $\mathbf{q_{25}}$ &  $\mathbf{q_{50}}$ &  $\mathbf{q_{75}}$  & $\mathbf{q_{95}}$ &  $\mathbf{q_{98}}$ & $\mathbf{q_{100}}$ \\ \midrule
			\comus          &              3.41 &        2.54 &  0.51 &  0.77 &  1.39 &  8.37 & 18.50 &  523.67 \\
			\comusbound     &              1.70 &        1.35 &  0.58 &  0.92 &  1.88 &  3.67 &  4.45 &    7.34 \\
			\ousbiter       &              0.92 &        1.31 &  0.54 &  0.90 &  1.84 &  3.58 &  4.24 &    6.11 \\ \hline
			\comusincr     &              4.33 &        3.66 &  0.27 &  0.42 &  0.72 &  8.69 & 21.46 & 3452.20 \\
			\ousbincr       &              3.22 &        0.58 &  0.39 &  0.51 &  0.67 &  1.07 &  1.57 &   17.58 \\
			\ousbiterincr   &              2.58 &        0.45 &  0.32 &  0.43 &  0.54 &  0.70 &  1.04 &   18.05 \\
		\end{tabular}
	\end{adjustbox}
	\caption{\textbf{Non-timed out instances:} Decomposition of runtime for individual explanations into time to generate the first explanation step ($t_1$), the average time to produce an additional explanation step $\overline{t_{expl}}$ and $q_{xx}$ the quantiles of the explanation times.}
	\label{tab:indiv-explanations-all}
\end{table}

For the non-incremental variants, in the upper part of \cref{tab:indiv-explanations-all}, \comus is able to compute part of the explanations faster than the other \comus configurations. We see, however, that both incremental and non-incremental \comus still take a lot of time for some explanations compared to the other \comus-based algorithms.

In the case of \emph{\comusbound}, the time to the first explanation reflects how important the ordering of literals to explain is for quickly finding a good bound on the cost of the next best explanation. 

\paragraph{Logic Grid Puzzles} In \cref{tab:indiv-explanations-logic-grid-puzzles}, we detail the runtime for explaining the logic grid puzzles of \citet{ecai/BogaertsGCG20}. 
In that work, due to the use of heuristics for finding low-cost explanations, the full explanation of a puzzle took between `15 minutes and a few hours'. 
Explaining the pasta puzzle takes less than 5 minutes using \ousbiterincr with \emph{Multi}-\subsetmaxsat~as \corrsubsets procedure. 
Not only can we generate optimal explanations with respect to a given cost function, but we can also quickly provide an initial explanation and additional explanations.
This suggests that our methods can be integrated into an interactive setting.

\begin{table}[!h]
	\centering
		\begin{tabular}{l|c|c|c}
			\textbf{puzzle} & $\mathbf{t_1}$ & $\mathbf{\overline{t_{expl}}}$ & $\mathbf{t_{tot}}$ \\
			\hline
			origin &1.07s & 0.29s & 43.05s \\
			p12 & 1.25s & 0.43s & 63.85s \\
			p13 & 1.20s & 0.37s & 56.22s \\
			p16 & 1.04s & 0.27s & 40.98s \\
			p18 & 1.14s & 0.16s & 23.32s \\
			p19 & 3.76s & 0.55s & 137.08s \\
			p20 & 1.14s & 0.16s & 23.38s \\
			p25 & 1.12s & 0.26s & 38.32s \\
			p93 & 1.13s & 0.32s & 47.83s \\
			pasta & 0.60s & 2.98s & 286.03s \\
		\end{tabular}
	\caption{Runtime decomposition of \textbf{Logic Grid Puzzles} of \citet{ecai/BogaertsGCG20} into time to generate the first explanation step ($t_1$), the average time to produce an additional explanation step $\overline{t_{expl}}$, and time to generate a full explanation sequence $t_{tot}$.}
\label{tab:indiv-explanations-logic-grid-puzzles}
\end{table}

	\begin{table}[!h]
		\centering
		\begin{adjustbox}{max width=\textwidth} 	
			\begin{tabular}{r|c|c|c}
				\textbf{config} & $\#$none&$\#$timed out  &        $\mathbf{\overline{ \raisebox{0pt}[1.2\height]{{\text{\small \%}} expl}}}$  \\ 
				\midrule
				\comus &   3 & 112 &   44.57 \\
				\comusbound &   32 & 92  & 35.21 \\
				\ousbiter &   15 & 92 &  43.43  \\\hline
				\comusincr &   4 & 158  & 52.94  \\
				\ousbincr &   44 & 94 &  31.98 \\
				\ousbiterincr &   15 & 92  & 49.45  \\
			\end{tabular}
			
		\end{adjustbox}
		\caption{\textbf{Timed out instances:} \emph{$\#$none} is the number of instances where no explanations step was found before the time out, \emph{$\#$timed out} is the number of instances that timed out, $\overline{\raisebox{0pt}[9pt]{{\small \%} expl}}$ is the average percentage of literals explained.}
		\label{tab:timed-out-instances-indiv-explanations-all}
	\end{table}
	
The next step is to find out what causes instances to timeout before the full explanation sequence has been generated with \emph{Multi}-\subsetmaxsat~for all \comus configurations. 

\paragraph{Timed out Instances} Most of the instances that timed out had on average more literals to explain than those that did not.
\cref{tab:timed-out-instances-indiv-explanations-all} provides more context about the timed out instances: 
\begin{itemize}
	\item \emph{$\#$timed out} is the number of instances that have timed out.
	\item \emph{$\#$none} is to the number of instances where no explanation step was generated.
	\item 	$\mathbf{\overline{ \raisebox{0pt}[1.2\height]{{\text{\small \%}} expl}}}$ is the average percentage of literals that were explained before the instance timed out.
\end{itemize}
In \cref{fig:timedout-instances} we specifically report the average time taken to generate a first explaining step ($\mathbf{\overline{t_{1}}}$) for all configurations. 

Note from \cref{tab:timed-out-instances-indiv-explanations-all} that \comus{(+Incr.)}~has the smallest number of instances where no explanation was found. Unlike \comusbound{(+Incr.)}~and \comussplit{(+Incr.)}, \comus{(+Incr.)}~does not need to iterate over the many literals to explain in order to find a good bound on the cost of the next best explanation step. 
Furthermore, we observe that \comus{(+Incr.)}~is the fastest at generating a first explanation step with many outliers close to the .

Similar to the observations in \cref{sec:Instance-level speed-up with incrementality}, incrementality has a high impact on the time to explain an instance for all \comus configurations. 
Adding incrementality to all \comus configurations results in more instances with no explanation found than without incrementality. 
However, incrementality helps to explain more literals, except for \comusbound, which first requires computing many OUSs before the best one is found. 
A similar trend can be seen in \cref{fig:timedout-instances}, where the introduction of incrementality increases the time to generate a first explanation step.
Second, \comusbound depends on a good ordering of the literals, which can change at any explanation step in the sequence.

\begin{figure}[!h]
	\centering
	\includegraphics[width=0.65\textwidth]{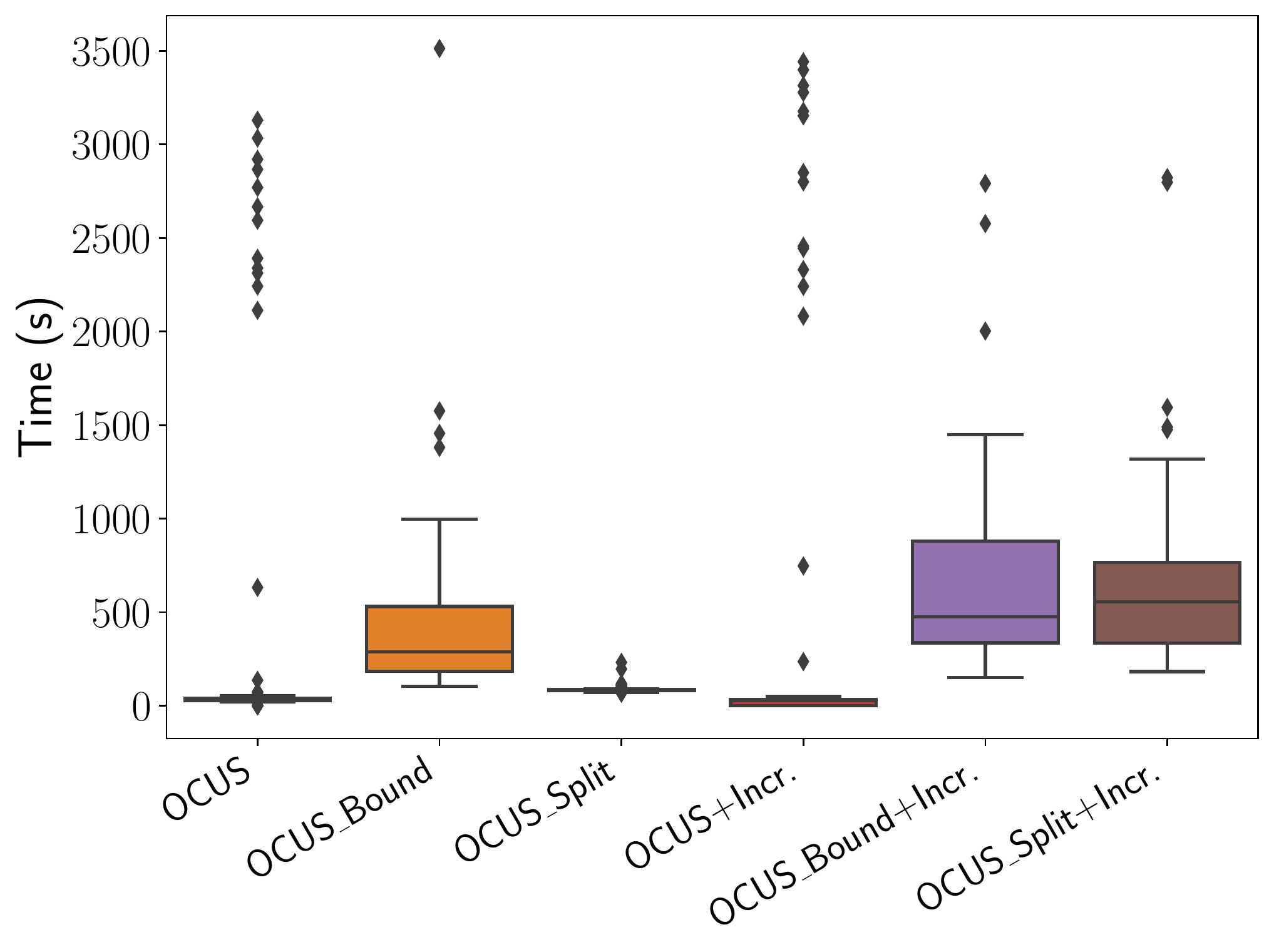}
	\caption{Average time to a first explanation step $\mathbf{\overline{t_{1}}}$.}
	\label{fig:timedout-instances}
\end{figure}

To conclude, for an interactive context where the user asks for \emph{an immediate explanation}, \comussplit will be the fastest to compute an explanation step. 
If the user \emph{explores the sequence of explanations} and repeatedly \emph{asks for additional explanations}, \ousbiterincr~will be able to capitalise on incrementality to bring the average time close to real time.

\section{Conclusion and Future Work}

In this paper, we tackled the problem of efficiently generating explanations that are optimal with respect to a cost function.
For that, we introduced algorithms that replace the many calls to an unsatisfiable core extractor of \citet{bogaerts2021framework}, with a single call to an optimal constrained unsatisfiable subset (\comus).
Our algorithm for computing \comus{}s uses the implicit hitting set duality between Minimum Correction Subsets and Minimal Unsatisfiable Subsets. 
We propose two additional variants for the `exactly one of' constraint used in step-wise explanation generation.
The main bottleneck of these approaches is having to repeatedly compute \emph{optimal} hitting sets.
To compensate, we have developed methods for enumerating correction subsets that explore the trade-off between efficiency and quality of the subsets generated.
An open question in an interactive setting is whether there are other methods of enumerating correction subsets that are better suited to reducing the time to first explanation and the average explanation time.

To efficiently generate the \emph{whole} explanation sequence, we introduced \emph{incrementality}, which allows the reuse of computed information, i.e. satisfiable subsets remain valid from one explanation call to another.
In the case of \comus for explanation sequence generation, where the underlying hitting set solver is MIP-based, we instantiate the MIP solver once for all explanation steps and keep track of computed sets-to-hit.
However, MIP solvers do not natively support non-linear cost functions. 
Therefore, the current implementation of \comus may not be able to fully capture the complexity of what constitutes a good explanation due to the limitations of the cost functions we can encode.
Furthermore, the question of which interpretability metric to use to characterise understandable explanations remains open.  

The concept of (incremental) \comus is not limited to explaining satisfaction problems. We are also interested in exploring other applications, such as configuration and machine learning, where it is necessary to compute not only minimal but also optimal unsatisfiable subsets.
For example, a potential avenue for future work is how these explanation generation methods map to a constraint optimisation setting where branching or searching is required to solve the problem at hand. 
The synergies of our approach with the more general problem of QMaxSAT \cite{DBLP:journals/constraints/IgnatievJM16} is another open question. 

Finally, for interactive settings such as interactive tutoring systems, we are able to generate explanations in near real time (less than a second) using \comussplit. Another question is how to distribute the computation across multiple cores to parallelize the computation to further speed up the generation of explanations.

\acks{This research received partial funding from the Flemish Government (AI Research Program); the FWO Flanders project G070521N; and funding from the European Research Council (ERC) under the European Union’s Horizon 2020 research and innovation program (Grant No. 101002802, CHAT-Opt).}

\appendix
\newpage
\section{Puzzle Data}\label{apx:puzzledata}

\begin{table}[!h]\centering
	\begin{tabular}{llrrr}
		\toprule
		&              &  \multicolumn{3}{c}{\textbf{Properties of instances}} \\
		&              &  n &  avg. $\#$ {clauses} &  avg. $\#$ lits-to-explain\\
		\textbf{puzzle type} & \textbf{instance} &              &          &                      \\
		\midrule
		logic & pasta &    1 &           4572 &                    96 \\
		& 150 &    8 &           7535 &                   150 \\
		& 250 &    1 &          17577 &                   250 \\\hline
		sudoku & 9x9-easy &   25 &          14904 &                   497 \\
		& 9x9-simple &   25 &          14904 &                   497 \\
		& 9x9-intermediate &   25 &          14904 &                   501 \\
		& 9x9-expert &   25 &          14904 &                   505 \\\hline
		demystify & binairo &  156 &          16759 &                    99 \\
		& garam &   10 &          14486 &                   756 \\
		& kakurasu &    1 &            226 &                    16 \\
		& kakuro &    6 &           6961 &                   240 \\
		& killersudoku &   13 &        3609685 &                   729 \\
		& miracle &    1 &          59331 &                   729 \\
		& nonogram &    1 &           9813 &                    36 \\
		& skyscrapers &   16 &          17991 &                   159 \\
		& star-battle &    5 &           7533 &                    82 \\
		& sudoku &   76 &          34143 &                   729 \\
		& tents &    4 &          18644 &                   476 \\
		& thermometer &    1 &           1227 &                    36 \\
		& x-sums &    1 &         109717 &                   729 \\
		\bottomrule
	\end{tabular}
	\caption{Characteristics of puzzle instances}
	\label{tab:puzzle-composition-family}
\end{table}

\bibliography{krrlib,ref}
\bibliographystyle{theapa}

\end{document}